\documentclass{article}





    \usepackage[nonatbib,final]{neurips_2021}

\usepackage[utf8]{inputenc} 
\usepackage[T1]{fontenc}    
\usepackage{hyperref}       
\usepackage{url}            
\usepackage{booktabs}       
\usepackage{amsfonts}       
\usepackage{nicefrac}       
\usepackage{microtype}      
\usepackage{xcolor}         



\usepackage{amsmath}
\usepackage{amsthm}
\usepackage{amssymb}
\usepackage{dsfont}
\usepackage{bbm}
\usepackage{mathtools}
\usepackage{mathrsfs}
\usepackage{theoremref}
\mathtoolsset{showonlyrefs}
\usepackage{multicol}

\newtheorem{thm}{Theorem}[]

\newtheorem{prop}{Property}[]
\newtheorem{ass}{Assumption}[]

\newtheorem{rem}{Remark}[]


\usepackage{courier} 

\usepackage{lipsum} 

\usepackage{graphicx}
\usepackage[space]{grffile} 

\usepackage[numbers]{natbib}

\usepackage{hyperref}

\usepackage{graphicx}
\usepackage{wrapfig}
\usepackage{xcolor}
\definecolor{dark-red}{rgb}{0.4,0.15,0.15}
\definecolor{dark-blue}{rgb}{0,0,0.7}
\hypersetup{
    colorlinks, linkcolor={dark-blue},
    citecolor={dark-blue}, urlcolor={dark-blue}
}


\DeclareMathOperator*{\argmin}{arg\,min}
\DeclareMathOperator*{\argmax}{arg\,max}

\usepackage{algpseudocode}
\usepackage[vlined,linesnumbered,ruled,algo2e]{algorithm2e}
\SetKwProg{Fn}{Function}{}{}
\let\oldnl\nl
\newcommand{\nonl}{\renewcommand{\nl}{\let\nl\oldnl}}

\usepackage{multicol}
\usepackage{enumitem}
\usepackage[utf8]{inputenc} 
\usepackage[T1]{fontenc}    
\usepackage{hyperref}       
\usepackage{url}            
\usepackage{booktabs}       
\usepackage{nicefrac}       

\usepackage[bottom]{footmisc}

\title{Universal Off-Policy Evaluation}

%

\author{%
    Yash Chandak \\
    University of Massachusetts\\
    \And
    Scott Niekum \\
    University of Texas Austin \\
    \And
    Bruno Castro da Silva \\
    University of Massachusetts\\
    \And
    Erik Learned-Miller \\
    University of Massachusetts\\
    \And
    Emma Brunskill \\
    Stanford University \\
    \And
    Philip S. Thomas \\
    University of Massachusetts
}

\begin{document}

\maketitle

\begin{abstract}
When faced with sequential decision-making problems, it is often useful to be able to predict what would happen if decisions were made using a new policy. Those predictions must often be based on data collected under some previously used decision-making rule. 
Many previous methods enable such \emph{off-policy} (or counterfactual) estimation of the \textit{expected} value of a performance measure called the \emph{return}. 
In this paper, we take the first steps towards a \emph{\underline{un}iversal \underline{o}ff-policy estimator} (UnO)---one that provides off-policy estimates and high-confidence bounds for \emph{any} parameter of the return distribution.
We use UnO for estimating and simultaneously bounding the mean, variance, quantiles/median, inter-quantile range, CVaR, and the entire cumulative distribution of returns.
Finally, we also discuss UnO's applicability in various settings, including fully observable, partially observable (i.e., with unobserved confounders), Markovian, non-Markovian, stationary, smoothly non-stationary, and  discrete distribution shifts. 

\end{abstract}

\section{Introduction}

Problems requiring sequential decision-making are ubiquitous \citep{barto2017some}. 
When online experimentation is costly or dangerous, it is essential to conduct off-policy evaluation before deploying a new policy; that is, one must leverage existing data collected using some policy $\beta$ (called a behavior policy) to evaluate a performance metric of another policy $\pi$ (called the evaluation policy).
For problems with high stakes, such as in terms of health \citep{liao2020off} or financial assets \citep{theocharous2015ad},
 it is also crucial to provide high-confidence bounds on the desired performance metric to ensure reliability and safety. 

Perhaps the most widely studied performance metric in the off-policy setting is the expected return \citep{SuttonBarto2}.
However, this metric can be limiting for many problems of interest.
Safety-critical applications, such as automated healthcare, require minimizing the chances of risk-prone outcomes, and so performance metrics such as value at risk (VaR) or conditional value at risk (CVaR) are more appropriate \citep{keramati2020being,brown2020bayesian}. 
By contrast, applications like online recommendations are subject to noisy data and call for robust metrics like the median and other quantiles \citep{altschuler2019best}.
In order to improve user experiences, applications involving direct human-machine interaction, such as robotics and autonomous driving, focus on minimizing uncertainty in their outcomes and thus use metrics like variance and entropy \citep{kuindersma2013variable,tamar2016learning}.
Recent work in distributional reinforcement learning (RL) have also investigated estimating the cumulative distribution of returns \citep{bellemare2017distributional, dabney2020distributional} and its various statistical functionals \citep{rowland2019statistics}.
While it may even be beneficial to use all of these different metrics simultaneously to inform better decision-making, even individually estimating and bounding any performance metric, other than mean and variance, in the \textit{off-policy setting} has remained an open problem.

This raises the main question of interest: \textit{How do we develop a universal off-policy method---one that can estimate any desired performance metrics and can also provide finite-sample confidence bounds that hold simultaneously with high probability for those metrics?}

\textbf{Prior Work:}
Off-policy methods can be broadly categorized as model-based or model-free \citep{SuttonBarto2}.
%
%
Model-based methods typically require strong assumptions on the parametric model when statistical guarantees are needed.
%
Further, using model-based approaches to estimate parameters other than the mean 
can also require estimating the \textit{distribution} of rewards for \textit{every} state-action pair 
in order to obtain the complete return distribution for any policy.

By contrast, model-free methods are applicable to a wider variety of settings. 
Unfortunately, the popular technique of using \textit{importance-weighted returns} \citep{precup2000eligibility} only corrects for the \textit{mean} under the off-policy distribution.
Recent work by \citet{chandak2021hcove} provides a specialized extension to only correct for the variance.
Outside RL, works in the econometrics and causal inference literature have also considered quantile treatments \citep{donald2014estimation,wang2018quantile} and inferences on counterfactual distributions \citep{dinardo1995labor,chernozhukov2013inference,firpo2016identification}, but these methods are not developed for sequential decisions and do not provide any high-confidence bounds with guaranteed coverage. Further, they often mandate stationarity, identically distributed data, and full observability (i.e., no confounding).

Existing frequentist high-confidence bounds are not only specifically designed for either the mean or variance, but also hold only \textit{individually} \citep{thomas2015highb, jiang2020minimax, chandak2021hcove}.
Instead of frequentist intervals,  a Bayesian posterior distribution over the mean return and various statistics of that distribution can also be obtained \citep{yang2020offline}.  
We are not aware of any method that provides off-policy bounds or even estimates for \emph{any} parameter of the return, while also handling different domain settings that are crucial for RL related tasks.
Therefore, a detailed discussion of existing work is deferred to Appendix \ref{apx:related}.

\textbf{Contributions:} We take the first steps towards a \textit{\underline{un}iversal \underline{o}ff-policy estimator} (UnO) that estimates and bounds the \textit{entire distribution} of returns, and then derives estimates and simultaneous bounds for all parameters of interest.
With UnO, we make the following contributions:

\textbf{A. } 
For \textit{any} distributional parameter (mean, variance, quantiles, entropy, CVaR, CDF, etc.), we provide an off-policy method to obtain \textbf{(A.1)}  model-free estimators; \textbf{(A.2)} high-confidence bounds that have guaranteed coverage \textit{simultaneously} for all parameters and that, perhaps surprisingly, often nearly match or outperform prior bounds specifically designed for the mean and the variance; and
\textbf{(A.3)}   approximate bounds using statistical bootstrapping that can often be significantly tighter.

\textbf{B. }
The above advantages hold for \textbf{(B.1)} fully observable and partially observable (i.e., with unobserved confounders) settings,
\textbf{(B.2)} Markovian and non-Markovian settings, and \textbf{(B.3)} settings with stationary, smoothly non-stationary, and discrete distribution shifts in a policy's performance. 

\textbf{Limitations:}
\label{apx:limitations}
Our method uses importance sampling and thus \textbf{(1)} Requires knowledge of action probabilities under the behavior policy $\beta$, \textbf{(2)} Any outcome under the evaluation policy should have a sufficient probability of occurring under $\beta$, and  \textbf{(3)} Variance of our estimators scales exponentially with the horizon length  \citep{guo2017using,liu2018breaking}, which may be unavoidable in non-Markovian domains \citep{jiang2016doubly}.

\textbf{Notation: }
For brevity, we first restrict our focus to the stationary setting. In Section \ref{sec:confounding}, we discuss how to tackle non-stationarity and distribution shifts. 
A \textit{partially observable Markov decision process} (POMDP) is a tuple $(\mathcal S,  \mathcal O, \mathcal A, \mathcal P, \Omega, \mathcal R, \gamma, d_0)$, where $\mathcal S$ is the set of states, $\mathcal O$ is the set of observations,  $\mathcal A$ is the set of actions, $\mathcal P$ is the transition function, $\Omega$ is the observation function, $\mathcal R$ is the reward function, $\gamma \in [0,1]$ is the discount factor, and $d_0$ is the starting state distribution.
Although our results extend to the continuous setting, for notational ease, we consider $\mathcal S, \mathcal A, \mathcal O$, and the set of rewards to be finite. 
Since the true underlying states are only partially observable, the resulting rewards and transitions from one partially observed state to another are therefore also potentially non-Markovian \citep{singh1994learning}. 
We write $S_t, O_t, A_t$, and $R_t$ to denote random variables for state, observation, action, and reward respectively at time $t$.
Let $\mathcal D$ be a data set  $(H_i)_{i=1}^n$ collected using \textit{behavior} policies $(\beta_i)_{i=1}^n$, where each $H_i$ denotes the \textit{observed trajectory}
$( O_{0}, A_{0}, \beta(A_{0}|O_{0}), R_{0}, O_{1}, ...)$. 
Notice that an observed trajectory contains $\beta(A_{t}|O_{t})$ and does not contain the states $S_t$, for all $t$. 
Let $G_i \coloneqq \sum_{j=0}^T \gamma^j R_{j}$ be the \textit{return} of  $H_i$, where   $\forall i, \,\, G_{\min} < G_{i} < G_{\max}$ for some finite constants $G_{\min}$ and $G_{\max}$, and $T$ is a finite horizon length.
Let $G_\pi$ and $H_\pi$ be the random variables for returns and complete trajectories under any policy $\pi$, respectively.
Since the set of observations, actions, and rewards are finite, and $T$ is finite, the total number of possible trajectories is finite.
Let $\mathcal X$ be the finite set of returns corresponding to these trajectories.
Let $\mathscr H_\pi$ be the set of all possible trajectories for any policy $\pi$. 
Sometimes, to make the dependence explicit, we write $g(h)$ to denote the return of trajectory $h$.
Further, to ensure that samples in $\mathcal D$ are informative,
we make a standard assumption that any outcome under $\pi$ has sufficient probability of occurring under $\beta$ 
(see Appendix \ref{apx:ass} for further discussion of assumptions in general),
\clearpage
\begin{ass} The set $\mathcal D$ contains independent (not necessarily identically distributed) observed trajectories generated using  $(\beta_i)_{i=1}^n$, such that for some (unknown) $\varepsilon > 0$, 
$(\beta_i(a|o)<\varepsilon)\implies(\pi(a| o) = 0)$, for all $o \in \mathcal O, a \in \mathcal A,$ and $i \in \{1,2,...,n\}$.
\thlabel{ass:support}
\end{ass}

\section{Idea Summary}
For the desired universal method, instead of considering each parameter individually, we suggest estimating the entire \textit{cumulative distribution function} (CDF) of returns first:
\begin{align}
     \forall \nu \in \mathbb R, \quad\quad F_\pi(\nu) \coloneqq \Pr \Big(G_\pi \leq \nu \, \Big).
\end{align}
%
%
Any distributional parameter, $\psi(F_\pi)$, can then be estimated from the estimate of $F_\pi$. 
However, we only have off-policy data from a behavior policy $\beta$, and the typical use of importance sampling \citep{precup2000eligibility} only corrects for the mean return.
To overcome this, we propose an estimator $\hat F_n$ that uses importance sampling from the \textit{perspective of the CDF} to correct for the \textit{entire} distribution of returns.
The CDF estimate, $\hat F_n$, is then used to obtain a plug-in estimator $\psi(\hat F_n)$ for any distributional parameter $\psi(F_\pi)$.

Next, we show that this CDF-centric perspective 
provides the additional advantage that, 
if we can compute a $1-\delta$ \textit{confidence band} $\mathcal F: \mathbb R \rightarrow 2^{\mathbb R}$ such that
\begin{align}
    \Pr \Big( \forall \nu \in \mathbb R, \,\, \Pr\big(G_\pi \leq \nu\big) \in \mathcal F(\nu) \Big) \geq 1 - \delta,
\end{align}
then a $1-\delta$ upper (or lower) high-confidence bound on any parameter, $\psi(F_\pi)$, can be obtained by searching for a function $F$ that  maximizes (or minimizes) $\psi(F)$ and $ \forall \nu \in \mathbb R$ has $F(\nu) \in \mathcal F(\nu)$.

\section{UnO: Universal Off-Policy Estimator}

In the\textit{ on-policy} setting, one approach for estimating any parameter of returns, $G_\pi$, might be to first estimate its \textit{cumulative distribution} $F_\pi$  and then use that to estimate its parameter $\psi(F_\pi)$.  
However, doing this in the off-policy setting requires additional consideration as 
the \textit{entire} distribution of the observed returns needs to be adjusted to estimate $F_\pi$ since the data is collected using behavior policies that can be different from the evaluation policy $\pi$.

We begin by observing that $\forall \nu \in \mathbb R, F_\pi(\nu)$ can be expanded using the fact that the probability that the return $G_\pi$ equals $x$ is the sum of the probabilities of the trajectories $H_\pi$ whose return equals $x$,
\begin{align}
    F_\pi(\nu) 
    &= \Pr(G_\pi \leq \nu)
    = \sum_{x\in\mathcal X, x \leq \nu} \Pr(G_\pi = x) = \sum_{x\in\mathcal X, x \leq \nu} \left( \sum_{h \in \mathscr H_\pi} \Pr(H_\pi = h) \mathds{1}_{\{g(h) = x\}} \right),\;\; \label{main:eqn:1}
\end{align}
 where $\mathds{1}_{A} = 1$ if $A$ is true and 0 otherwise.
Now, observing that the indicator function can be one for at most a single value less than $\nu$ as $g(h)$ is a deterministic scalar given $h$, \eqref{main:eqn:1} can be expressed as,
\begin{align}
    F_\pi(\nu) &
    =   \sum_{h \in \mathscr H_\pi} \Pr(H_\pi = h)  \sum_{x\in\mathcal X, x \leq \nu}\mathds{1}_{\{g(h) = x\}} 
    =   \sum_{h \in \mathscr H_\pi} \Pr(H_\pi = h)\Big( \mathds{1}_{\{g(h) {\color{red}\leq} \nu\}}\Big), \label{main:eqn:2} 
\end{align}
where the red color is used to highlight changes. 
Now, from \thref{ass:support} as $\forall \beta, \,\, \mathscr H_\pi \subseteq \mathscr H_\beta$,\footnote{Results can be extended to hybrid probability measures using Radon-Nikodym derivatives.}
\begin{align}
   F_\pi(\nu) &=   \sum_{h \in {\color{red}\mathscr H_\beta}} \Pr(H_\pi = h) \Big(\mathds{1}_{\{g(h) \leq \nu\}}\Big) =   \sum_{h \in \mathscr H_\beta} \Pr(H_\beta = h) \frac{ \Pr(H_\pi = h)}{\Pr(H_\beta = h)} \Big(\mathds{1}_{\{g(h) \leq \nu\}} \Big). \label{main:eqn:4}
\end{align}

The form of $F_\pi(\nu)$ in \eqref{main:eqn:4} is beneficial as it suggests a way to not only perform off-policy corrections for one specific parameter, as in prior works \citep{precup2000eligibility,chandak2021hcove}, but for the \textit{entire cumulative distribution function} (CDF) of return $G_\pi$.
Formally, let $\rho_i \coloneqq \prod_{j=0}^{T} \frac{\pi(A_{j}| O_{j})}{\beta_i(A_{j}|O_{j})}$ denote the importance ratio for $H_i$, which is equal to $\Pr(H_\pi=h)/\Pr(H_\beta=h)$ (see Appendix \ref{apx:proofs}).

Then, based on \eqref{main:eqn:4}, 
we propose the following non-parametric and model-free estimator for $F_\pi$.
\begin{align}
    \forall \nu\in \mathbb{R}, \quad \hat F_n(\nu) \coloneqq \frac{1}{n} \sum_{i=1}^n \rho_i  \mathds{1}_{\{G_i \leq \nu\}}. 
    \label{eqn:Festimator}
\end{align}
Figure \ref{fig:IS} provides intuition for \eqref{eqn:Festimator}.
In the following theorem, we establish that this  estimator, $\hat F_n$,
is unbiased and not only pointwise consistent, but also a uniformly consistent estimator of $F_\pi$, even when the data $\mathcal D$ is collected using multiple behavior policies $(\beta_i)_{i=1}^n$.
The proof (deferred to Appendix \ref{apx:proofs}) also illustrates that by using knowledge of action probabilities under the behavior policies, no additional adjustments (e.g., front-door or backdoor \citep{pearl2009causality}) are required by $\hat F_n$ to estimate $F_\pi$, even when the domain is non-Markovian or has  partial observability (confounders).
%

\begin{figure}[t]
\begin{minipage}[c]{0.42\textwidth}
    \centering
    \includegraphics[width=0.99\textwidth]{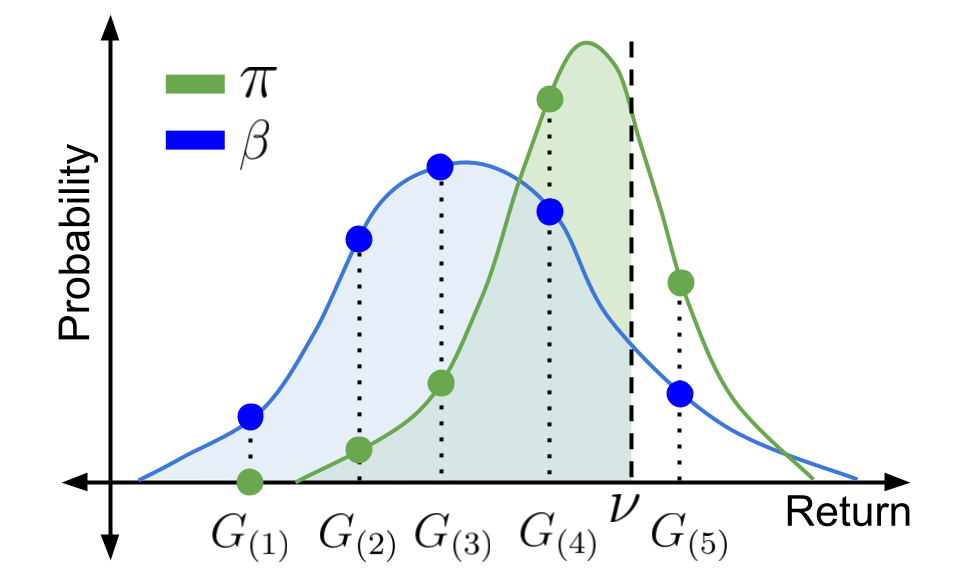}
  \end{minipage}\hfill
  \begin{minipage}[c]{0.58\textwidth}
    \caption{An illustration of return distributions for $\pi$ and $\beta$.
    The CDF at any point $\nu$ corresponds to  the area under the probability distribution up until $\nu$.
    Having order statistics $(G_{(i)})_{i=1}^5$ of samples $(G_i)_{i=1}^5$ drawn using $\beta$, \eqref{eqn:Festimator} constructs an empirical estimate of the CDF
    for $\pi$ (\textit{green} shaded region) by correcting for the probability of observing each $G_i$ using the \textit{importance-sampled counts} of $G_i \leq \nu$.
    %
    Additionally, weighted-IS (WIS) can be used as in \eqref{eqn:WISF} for a variance-reduced estimator for $F_\pi$.}
    \label{fig:IS}
  \end{minipage}
\end{figure}

\begin{thm}
\thlabel{thm:Funbiased}
Under \thref{ass:support}, $\hat F_n$ is an unbiased and uniformly consistent estimator of $F_\pi$,
\begin{align}
    \forall \nu\in \mathbb{R}, \quad \mathbb{E}_{\mathcal D}\Big[\hat F_n(\nu)\Big] &= F_\pi(\nu), &
    \underset{\nu \in \mathbb R}{\sup} \quad \Big|\hat F_n(\nu) -  F_\pi(\nu) \Big| \overset{\text{a.s.}}{\longrightarrow} 0.
\end{align}
\end{thm}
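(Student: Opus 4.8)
The plan is to prove the claim in three stages: (i) show that each summand in \eqref{eqn:Festimator} has expectation $F_\pi(\nu)$, which gives unbiasedness; (ii) deduce pointwise almost-sure convergence from a strong law of large numbers for independent but non-identically-distributed bounded variables; and (iii) upgrade pointwise convergence to uniform convergence using the step-function structure of $F_\pi$ and $\hat F_n$.

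For step (i), the lemma to establish first is that, for every $h\in\mathscr H_{\beta_i}$, the trajectory-wise importance ratio satisfies $\rho_i(h)=\Pr(H_\pi=h)/\Pr(H_{\beta_i}=h)$. I would write $\Pr(H_\pi=h)$ by marginalizing over the unobserved underlying state sequence $s_{0:T}$: it factorizes into $d_0(s_0)\prod_j \Omega(o_j\mid s_j)\,\mathcal P(s_{j+1}\mid s_j,a_j)\,\mathcal R(r_j\mid\cdot)$ times $\prod_j \pi(a_j\mid o_j)$, and the only factors depending on the policy are the action probabilities $\prod_j\pi(a_j\mid o_j)$. Hence all dynamics terms cancel in the ratio $\Pr(H_\pi=h)/\Pr(H_{\beta_i}=h)$ (the denominator being strictly positive for $h\in\mathscr H_{\beta_i}$), leaving exactly $\prod_j\pi(a_j\mid o_j)/\beta_i(a_j\mid o_j)=\rho_i(h)$; this argument never invokes the Markov property or full observability, which is precisely why no backdoor/frontdoor-style adjustment is needed. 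Given the lemma, \eqref{main:eqn:4} yields $\mathbb{E}_{H_i\sim\beta_i}[\rho_i\mathds{1}_{\{G_i\le\nu\}}]=\sum_{h\in\mathscr H_{\beta_i}}\Pr(H_\pi=h)\mathds{1}_{\{g(h)\le\nu\}}$; since \thref{ass:support} implies $\mathscr H_\pi\subseteq\mathscr H_{\beta_i}$ and $\Pr(H_\pi=h)=0$ off $\mathscr H_\pi$, this sum equals $F_\pi(\nu)$ for every $i$, and averaging over $i$ gives $\mathbb{E}_{\mathcal D}[\hat F_n(\nu)]=F_\pi(\nu)$.

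For step (ii), fix $\nu$. By \thref{ass:support}, whenever $\beta_i(a\mid o)<\varepsilon$ we have $\pi(a\mid o)=0$, so each factor of $\rho_i$ is either $0$ or at most $1/\varepsilon$; hence $0\le\rho_i\mathds{1}_{\{G_i\le\nu\}}\le\varepsilon^{-(T+1)}$ almost surely. The variables $\{\rho_i\mathds{1}_{\{G_i\le\nu\}}\}_{i\ge1}$ are independent with common mean $F_\pi(\nu)$ and uniformly bounded variance, so $\sum_i\mathrm{Var}(\rho_i\mathds{1}_{\{G_i\le\nu\}})/i^2<\infty$ and Kolmogorov's strong law for independent summands gives $\hat F_n(\nu)\to F_\pi(\nu)$ a.s. For step (iii), I would exploit that returns lie in the finite set $\mathcal X=\{x_1<\dots<x_m\}$: both $F_\pi$ and (because every $\rho_i\ge0$) $\hat F_n$ are nondecreasing right-continuous step functions that are constant between consecutive elements of $\mathcal X$, so $\sup_{\nu\in\mathbb R}|\hat F_n(\nu)-F_\pi(\nu)|=\max_{k\in\{1,\dots,m\}}|\hat F_n(x_k)-F_\pi(x_k)|$ (the $k=m$ term covering $[x_m,\infty)$, where $\hat F_n=\tfrac1n\sum_i\rho_i\to1=F_\pi$ a.s. since $\mathbb{E}[\rho_i]=1$). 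A finite union of probability-one events has probability one, so step (ii) applied at each $x_k$ yields $\sup_\nu|\hat F_n(\nu)-F_\pi(\nu)|\overset{\mathrm{a.s.}}{\to}0$. For the continuous extension mentioned in the paper, one instead runs the classical Glivenko--Cantelli argument: choose grid points $-\infty=t_0<t_1<\dots<t_\ell=\infty$ with $F_\pi(t_j^-)-F_\pi(t_{j-1})\le1/\ell$, use monotonicity to sandwich $\hat F_n-F_\pi$ on each $[t_{j-1},t_j)$ by its grid-point values plus an $O(1/\ell)$ mesh term, apply step (ii) at the finitely many grid points, and let $\ell\to\infty$.

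Conceptually, the main obstacle is the change-of-measure bookkeeping in step (i): one must verify carefully that importance weighting by the product of action-probability ratios alone suffices in the partially observable, non-Markovian setting, i.e., that marginalizing out the hidden states contributes only policy-independent factors that cancel. Once that is in hand, steps (ii) and (iii) are standard; the only non-textbook ingredient is appealing to Kolmogorov's SLLN rather than the i.i.d. law, since the behavior policies $(\beta_i)$---and hence the laws of the summands---may differ across $i$.
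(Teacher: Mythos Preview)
Your proposal is correct and closely tracks the paper's own proof. The unbiasedness argument (your step (i)) matches the paper's Part~1: both establish $\Pr(H_\pi=h)/\Pr(H_{\beta_i}=h)=\rho_i(h)$ by factoring the trajectory likelihood and observing that only the action-probability terms depend on the policy; the paper works with the complete (hidden-state-augmented) trajectory while you marginalize the hidden states out, but the two framings are equivalent. Your step (ii), pointwise a.s.\ convergence via Kolmogorov's SLLN for independent bounded (hence finite-variance) summands, is exactly the paper's route.

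The only substantive difference is in step (iii). The paper, working in the generalized continuous setting of the appendix, runs the full Glivenko--Cantelli partitioning argument (including the $\kappa_i^-$ device to handle jumps). You instead exploit the finiteness of the return set $\mathcal X$ declared in the main text to reduce the supremum to a finite maximum and finish by a finite intersection of probability-one events, sketching Glivenko--Cantelli only for the continuous extension. Your shortcut is valid and more elementary under the main paper's finiteness assumption; the paper's argument is what is actually needed once returns are allowed to be continuous. One cosmetic note: the trajectory factorization you wrote in step (i) is Markovian in form ($\mathcal P(s_{j+1}\mid s_j,a_j)$, $\Omega(o_j\mid s_j)$), whereas the paper's appendix writes the general history-dependent version $p(s_{j+1}\mid h_j)$, $p(o_{j+1}\mid s_{j+1},h_j)$; your conclusion is unaffected since the cancellation relies only on the action-probability factors being the sole policy-dependent terms, but it would be worth matching the generality you claim.
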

\begin{rem}
Notice that the value of $\hat F_n(\nu)$ can be more than one, even though $F_\pi(\nu)$ cannot have a value greater than one for any $\nu \in \mathbb R$.
This is an expected property of estimators based on importance sampling (IS). 
For example, the IS estimates of expected return during off-policy mean estimation can be smaller or larger than the smallest and largest possible return when $\rho > 1$. 
\thlabel{rem:geq1}
\end{rem}

Having an estimator $\hat F_n$ of $F_\pi$, any parameter $\psi(F_\pi)$ can now be estimated using $\psi(\hat F_n)$. 
However, some parameters like the mean $\mu_\pi$, variance $\sigma^2_\pi$, and entropy ${\mathcal H_\pi}$, are naturally defined using the probability distribution $\text{d}F_\pi$ instead of the cumulative distribution $F_\pi$.  
Similarly, parameters like the $\alpha$-quantile $ Q^\alpha_\pi$ and inter-quantile range (which provide tail-robust measures for the mean and deviation from the mean) and  conditional value at risk $\text{CVaR}_\pi^\alpha$ (which is a tail-sensitive measure) are defined using the inverse CDF $F_\pi^{-1}(\alpha)$.
Therefore, let $(G_{(i)})_{i=1}^n$ be the \textit{order statistics} for samples $(G_i)_{i=1}^n$ and $G_{(0)} \coloneqq G_{\min}$. 
Then, we define the off-policy estimator of the inverse CDF for all $\alpha \in [0,1]$, and the probability distribution estimator $\mathrm{d}\hat F_n$ as,
\begin{align}
    \hat F^{-1}_n(\alpha) &\coloneqq \min \Big\{g \in (G_{(i)})_{i=1}^n \Big| \hat F_n(g) \geq \alpha \Big\}, \quad & \text{d}\hat F_n(G_{(i)}) \coloneqq \hat F_n(G_{(i)}) - \hat F_n(G_{(i-1)}), \label{eqn:inverseF} 
\end{align}
where $\text{d}\hat F_n(\nu)\coloneqq 0$ if $\nu \neq G_{(i)}$ for any $i \in (1,\dotsc,n)$. 
Using \eqref{eqn:inverseF}, we now define off-policy estimators for parameters like the mean, variance, quantiles, and CVaR (see Appendix \ref{apx:UnOinverse} for more details on these).
This procedure can be generalized to any other parameter of $F_\pi$ for which a sample estimator $\psi(\hat F_n)$ can be directly created using $\hat F_n$ as a plug-in estimator for $F_\pi$. 
\begin{align}
     \mu_\pi(\hat F_n)&\coloneqq \sum_{i=1}^n \text{d}\hat F_n(G_{(i)}) G_{(i)},
    &
    \sigma^2_\pi(\hat F_n) &\coloneqq \sum_{i=1}^n \text{d} \hat F_n(G_{(i)}) \Big (G_{(i)} -  \mu_\pi(\hat F_n) \Big)^2,
    \\
        {Q}^\alpha_\pi(\hat F_n) &\coloneqq \hat F_n^{-1}(\alpha),
    & {\text{CVaR}}^\alpha_\pi(\hat F_n) &\coloneqq \frac{1}{\alpha}\sum_{i=1}^n \text{d}\hat F_n(G_{(i)}) G_{(i)} \mathds{1}_{\left\{G_{(i)} \leq {Q}^\alpha_\pi(\hat F_n)\right\}}.
\end{align}
\begin{rem}
Let $H_i$ be the observed trajectory for the $G_i$ that gets mapped to $G_{(i)}$ when computing the order statistics.
Note that $\text{d}\hat F_n(G_{(i)})$ equals $\rho_i/n$ for this $H_i$.
This implies that the estimator for the mean, $\mu_\pi(\hat F_n)$, reduces \textit{exactly} to the existing full-trajectory-based IS estimator \citep{precup2000eligibility}.
\end{rem}

Notice that many parameters and their sample estimates discussed above are nonlinear in $F_\pi$ and $\hat F_n$, respectively (the mean is one exception).
Therefore, even though $\hat F_n$ is an unbiased estimator of $F_\pi$, the sample estimator, $\psi(\hat F_n)$, may be a biased estimator of $\psi(F_\pi)$.
This is expected behavior because even in the on-policy setting it is not possible to get unbiased estimates of some parameters (e.g., standard deviation), and UnO reduces to the 
on-policy setting 
when $\pi=\beta$. 
However, perhaps surprisingly, we establish in the following section that even when $\psi(\hat F_n)$ is a biased estimator of $\psi(F_\pi)$, 
high-confidence upper and lower bounds can still be computed for both $F_\pi$ and $\psi(F_\pi)$.

\section{High-Confidence Bounds for UnO}
\label{sec:Fbounds}
Off-policy estimators are typically prone to high variance, and when the domain can be non-Markovian, the curse of horizon might be unavoidable \citep{jiang2016doubly}. For critical applications, this might be troublesome \citep{thomas2019preventing} and thus
 necessitates obtaining confidence intervals to determine how much our estimates can be trusted. 
%
Therefore, in this section, we aim to construct a set of possible CDFs  $\mathcal F: \mathbb R \rightarrow 2^{\mathbb R}$, called a \textit{confidence band}, such that the true $F_\pi(\nu)$ is within the set $\mathcal F(\nu)$ with high probability, i.e., $\Pr ( \forall \nu \in \mathbb R, \,\, F_\pi(\nu) \in \mathcal F(\nu) ) \geq 1 - \delta$, for any $\delta \in (0, 1]$.
%
%
%
Subsequently, we develop finite-sample bounds for any parameter $\psi(F_\pi)$ using $\mathcal F$.

In the on-policy setting, $\mathcal F$ can be constructed using the DKW inequality \citep{dvoretzky1956asymptotic} and its tight constants \citep{massart1990tight}. 
However, its applicability to the off-policy setting is unclear as \textbf{(a)} unlike the on-policy CDF estimate, the ``steps'' of an off-policy CDF estimate are not of equal heights, \textbf{(b)} the ``steps'' do not sum to one (see Figure \ref{fig:Fband}) and the maximum height of the steps need not be known either, and \textbf{(c)} DKW assumes samples are identically distributed, however, off-policy data $\mathcal D$ might be collected using multiple different behavior policies.
This raises the question:\textit{ How do we obtain $\mathcal F$ in the off-policy setting?}


Before constructing a confidence band $\mathcal F$, let us first focus on obtaining bounds for a single point, $F_\pi(\kappa)$.
Let $X \coloneqq \rho ( \mathds{1}_{\{G \leq \kappa\}})$. Then, from \thref{thm:Funbiased}, we have that $\mathbb{E}_{\mathcal D}[X] = F_\pi(\kappa)$.
This implies that a confidence interval for the mean of $X$ provides a confidence interval for $F_\pi(\kappa)$.
Using this observation, existing confidence intervals for the mean of a bounded random variable can be directly applied to $X$ to obtain a confidence interval for $F_\pi(\kappa)$. 
For example, \citet{thomas2015higha} present tight bounds for the mean of IS-based random variables by mitigating the variance resulting from the heavy tails associated with IS; we use their method on $\hat F_n(\kappa)$ to bound  $F_\pi (\kappa)$.
Alternatively, recent work by \citet{kuzborskij2020confident} can potentially be used with a WIS-based $F_\pi$ estimate \eqref{eqn:WISF}.

Before moving further, we introduce some additional notation.
Let $(\kappa_i)_{i=1}^K$ be any $K$ ``key points'' and let $\texttt{CI}_-(\kappa_i, \delta_i)$ and $\texttt{CI}_+(\kappa_i, \delta_i)$ be the lower and the upper confidence bounds on $F_\pi(\kappa_i)$  constructed at each key point using the  observation made in the previous paragraph, such that
\begin{align}
\forall i \in (1,...,K), \quad & \Pr\Big(\texttt{CI}_-(\kappa_i, \delta_i) \leq F_\pi(\kappa_i) \leq \texttt{CI}_+(\kappa_i, \delta_i)\Big) \geq 1 - \delta_i.
\end{align}
We now use the following observation to obtain a band, $\mathcal F$, that contains $F_\pi$ with high confidence.
Because $F_\pi$ is a CDF, it is 
necessarily monotonically non-decreasing, and so if $F_\pi(\kappa_i) \geq \texttt{CI}_-(\kappa_i, \delta_i)$ then for any $ \nu \geq \kappa_i$, $F_\pi(\nu)$ must be no less than $\texttt{CI}_-(\kappa_i, \delta_i)$.
Similarly, if $F_\pi(\kappa_i) \leq \texttt{CI}_+(\kappa_i, \delta_i)$ then for any $ \nu \leq \kappa_i$, $F_\pi(\nu)$ must also be no greater than $\texttt{CI}_+(\kappa_i, \delta_i)$.
Let $\kappa_0 \coloneqq G_{\min}$, $\kappa_{K+1} \coloneqq G_{\max}$, $\texttt{CI}_-(\kappa_0, \delta_0) \coloneqq 0$, and $\texttt{CI}_+(\kappa_{K+1}, \delta_{K+1}) \coloneqq 1$; then, as illustrated in Figure \ref{fig:Fband}, we can construct a lower function $F_-$ and an upper function $F_+$ that encapsulate $F_\pi$ with high probability,
  \begin{minipage}[c]{0.5\textwidth}
\begin{align}
     F_-(\nu) \coloneqq 
    \begin{cases}
        1 & \text{if  } \nu > G_{\max},
        \\
        \underset{\kappa_i \leq \nu}{\max} \,\,\, \texttt{CI}_-(\kappa_i,\delta_i) & \text{otherwise}.
    \end{cases}
\end{align}   
  \end{minipage}\hfill
  \begin{minipage}[c]{0.5\textwidth}
    \begin{align}
     F_+(\nu) \coloneqq 
    \begin{cases}
        0 & \text{if  } \nu < G_{\min},
        \\
        \underset{\kappa_i \geq \nu}{\min} \,\,\, \texttt{CI}_+(\kappa_i,\delta_i) & \text{otherwise}.
    \end{cases}    \label{eqn:Fband2}
\end{align}
\end{minipage}
%
%
\begin{figure}
  \begin{minipage}[c]{0.4\textwidth}
    \centering
    \includegraphics[width=0.9\textwidth]{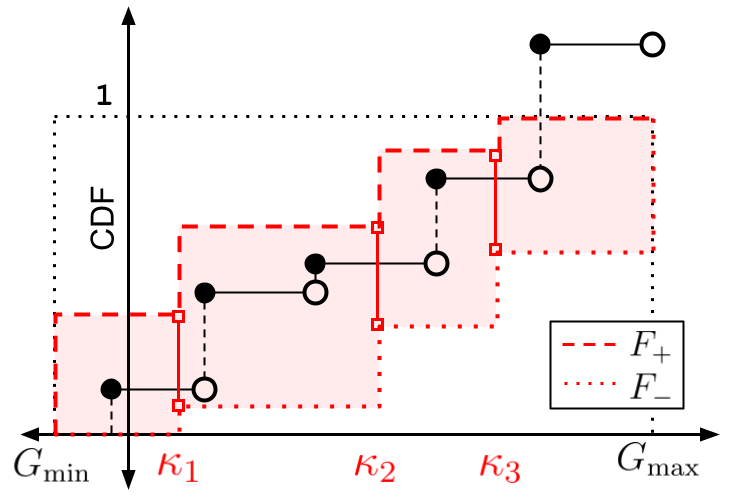}
  \end{minipage}\hfill
  \begin{minipage}[c]{0.6\textwidth}
    \caption{An illustration of $\hat F_n$ (in black) using five return samples and the confidence band $\mathcal F$ (red shaded region) computed using \eqref{eqn:Fband2} with confidence intervals (red lines) at three key points $(\kappa_i)_{i=1}^3$.
    %
    %
    Notice that the vertical ``steps'' in $\hat F_n$ can be of different heights and their total can be greater than $1$ due to importance weighting. However, since we know that $F_\pi$ is never greater than $1$,  $\mathcal F$ can be clipped at $1$.  }
    \label{fig:Fband}
  \end{minipage}
\end{figure}
\begin{thm}
Under \thref{ass:support}, for any $\delta \in (0, 1]$, if $\sum_{i=1}^K\delta_i \leq \delta$, then the confidence band defined by $F_-$ and $F_+$ provides guaranteed coverage for $F_\pi$.
That is, 
\begin{align}
    \Pr \Big(\forall \nu \in \mathbb R, \,\, F_{-}(\nu) \leq F_\pi(\nu) \leq F_+(\nu) \Big) \geq 1 - \delta.
\end{align}\thlabel{thm:Fguarantee1}
\end{thm}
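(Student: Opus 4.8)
The plan is to apply a union bound over the $K$ key points and then verify, on the high-probability event where all $K$ individual confidence intervals hold, that the piecewise functions $F_-$ and $F_+$ sandwich $F_\pi$ everywhere on $\mathbb R$. First I would define the ``good'' event $E \coloneqq \bigcap_{i=1}^K \{\texttt{CI}_-(\kappa_i,\delta_i) \leq F_\pi(\kappa_i) \leq \texttt{CI}_+(\kappa_i,\delta_i)\}$. Since each $\Pr(\texttt{CI}_-(\kappa_i,\delta_i) \leq F_\pi(\kappa_i) \leq \texttt{CI}_+(\kappa_i,\delta_i)) \geq 1-\delta_i$ by construction (this is the hypothesis on $\texttt{CI}_\pm$ established in the paragraph preceding the theorem, which itself rests on \thref{thm:Funbiased} giving $\mathbb E_{\mathcal D}[X] = F_\pi(\kappa_i)$), a union bound gives $\Pr(E^c) \leq \sum_{i=1}^K \delta_i \leq \delta$, hence $\Pr(E) \geq 1-\delta$.

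Next, I would argue deterministically that on $E$ we have $F_-(\nu) \leq F_\pi(\nu) \leq F_+(\nu)$ for every $\nu \in \mathbb R$. The key structural fact is monotonicity of the CDF $F_\pi$. For the lower bound: if $\nu > G_{\max}$ then $F_-(\nu) = 1 \geq F_\pi(\nu)$ trivially (actually $=$, since all return mass lies in $(G_{\min},G_{\max})$), wait — I should be careful and just note $F_\pi(\nu) \leq 1 = F_-(\nu)$ requires the inequality the other way, so let me restate: for $\nu > G_{\max}$, $F_\pi(\nu) = 1$, and we need $F_-(\nu) \le F_\pi(\nu)$, which holds since $F_-(\nu) = 1$. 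Otherwise, for any key point $\kappa_i \leq \nu$, monotonicity gives $F_\pi(\nu) \geq F_\pi(\kappa_i) \geq \texttt{CI}_-(\kappa_i,\delta_i)$ on $E$; taking the max over all such $\kappa_i$ (including $\kappa_0 = G_{\min}$ with $\texttt{CI}_-(\kappa_0,\delta_0) = 0$, which ensures the set is nonempty and the bound is vacuously correct when no informative key point lies below $\nu$) yields $F_\pi(\nu) \geq F_-(\nu)$. Symmetrically, for the upper bound: for $\nu < G_{\min}$, $F_\pi(\nu) = 0 = F_+(\nu)$; otherwise for any key point $\kappa_i \geq \nu$, monotonicity gives $F_\pi(\nu) \leq F_\pi(\kappa_i) \leq \texttt{CI}_+(\kappa_i,\delta_i)$ on $E$, and taking the min over all such $\kappa_i$ (including $\kappa_{K+1} = G_{\max}$ with $\texttt{CI}_+(\kappa_{K+1},\delta_{K+1}) = 1$) yields $F_\pi(\nu) \leq F_+(\nu)$.

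Combining: on the event $E$, which has probability at least $1-\delta$, we have $F_-(\nu) \leq F_\pi(\nu) \leq F_+(\nu)$ simultaneously for all $\nu \in \mathbb R$, which is exactly the claimed statement.

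\textbf{Main obstacle.} There is no deep obstacle here — the argument is a union bound plus monotonicity. The only subtlety worth handling carefully is the boundary/edge-case bookkeeping: making sure the $\max$ and $\min$ in the definitions of $F_-$ and $F_+$ are always over nonempty sets (handled by the artificial endpoints $\kappa_0, \kappa_{K+1}$ and their trivial intervals $0$ and $1$), and checking the $\nu > G_{\max}$ and $\nu < G_{\min}$ cases against the support assumption $G_{\min} < G_i < G_{\max}$. One should also make explicit that the event $E$ is defined with respect to the randomness in $\mathcal D$ only, and that $F_\pi$, $F_-$, $F_+$ as random objects depend on $\mathcal D$ solely through the $\texttt{CI}_\pm$ values (and the order statistics entering the estimator), so there is no hidden dependence issue when we pass from the pointwise guarantees to the uniform-in-$\nu$ conclusion.
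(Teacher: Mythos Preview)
Your proposal is correct and follows essentially the same approach as the paper: define the intersection event where all $K$ pointwise confidence intervals hold, apply a union bound to get probability at least $1-\delta$, and then use monotonicity of $F_\pi$ to extend the bounds at the key points to all $\nu\in\mathbb R$. Your treatment of the boundary cases ($\kappa_0,\kappa_{K+1}$ and $\nu$ outside $[G_{\min},G_{\max}]$) is in fact more explicit than the paper's, which dispatches the monotonicity step in a single sentence.
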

\begin{rem}
  Notice that any choice of $(\kappa_i)_{i=1}^K$ results in a valid band $\mathcal F$. 
  However, $\mathcal F$ can be made tighter by optimizing over the choice of $(\kappa_i)_{i=1}^K$. In Appendix \ref{apx:UnOoptim}, we present one such method using cross-validation to minimize the area enclosed within $\mathcal F$. 
\end{rem}
Having obtained a high-confidence band for $F_\pi$, we now discuss how high-confidence bounds for any parameter $\psi(F_\pi)$ can be obtained using this band.
 Formally, with a slight overload of notation let $\mathcal F$ be the set of all possible CDFs bounded between $F_-$ and $F_+$, that is,
\begin{align}
    \mathcal F \coloneqq \Big\{F  \,\, \Big| \,\, \forall \nu \in \mathbb R, \,\,  F_-(\nu) \leq F(\nu) \leq F_+(\nu)\Big\}.
\end{align}
This band $\mathcal F$ contains many possible CDFs, one of which is $F_\pi$ with high probability.
Therefore, to get a lower or upper  bound, $\psi_-$  or $\psi_+$, on $\psi(F_\pi)$, we propose deriving a CDF $F \in \mathcal F$ that minimizes or maximizes $\psi(F)$, respectively, and we show that these contain $\psi(F_\pi)$ with high probability:
%
%
\begin{align}
    \psi_- &\coloneqq \underset{F \in \mathcal F}{\inf} \,\,\, \psi (F), \quad\quad
    \psi_+ \coloneqq \underset{F \in \mathcal F}{\sup} \,\,\, \psi (F). \label{eqn:lub}
\end{align}
\begin{thm}
Under \thref{ass:support}, for any $1-\delta$ confidence band $\mathcal F$, the confidence interval defined by $\psi_-$ and $\psi_+$ provides guaranteed coverage for $\psi(F_\pi)$.
That is,
\begin{align}
\Pr \Big(\psi_- \leq \psi(F_\pi) \leq \psi_+ \Big) \geq 1 - \delta. 
\end{align}
\end{thm}
While obtaining $\psi_-$ 
might not look straightforward, one can obtain closed-form expressions for many popular parameters of interest. 
In other cases, simple algorithms exist for computing $\psi_-$ and $\psi_+$ \citep{romano2002explicit}. 
%
Figure \ref{fig:geometry} provides geometric depictions of the closed-form expressions for some parameters. 

%
%

%
%
\begin{rem}
  Perhaps surprisingly, even though $\psi(\hat F_n)$  may be biased, 
we can obtain high-confidence bounds with guaranteed coverage on any $\psi(F_\pi)$ using the confidence band $\mathcal F$. In fact,
   confidence bounds for \emph{all} parameters computed using \eqref{eqn:lub}  
    hold \emph{simultaneously} with probability at least $1-\delta$ as they are all derived from the same confidence band, $\mathcal F$.
\end{rem}

\begin{figure*}
    \centering
    \includegraphics[width=0.32\textwidth]{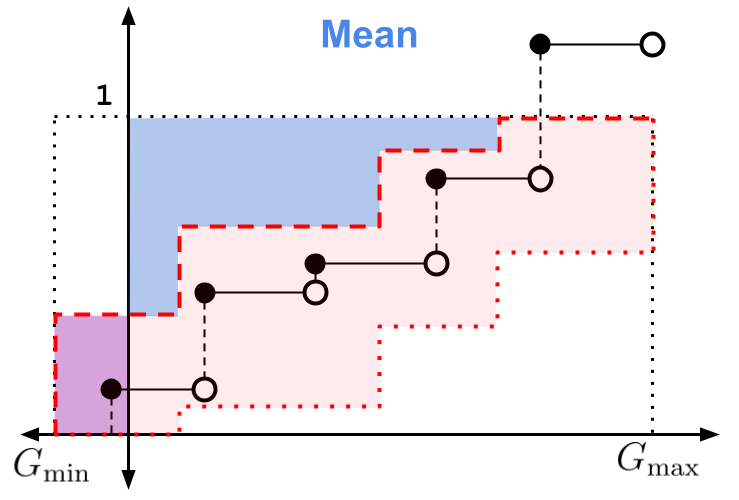}
    \includegraphics[width=0.32\textwidth]{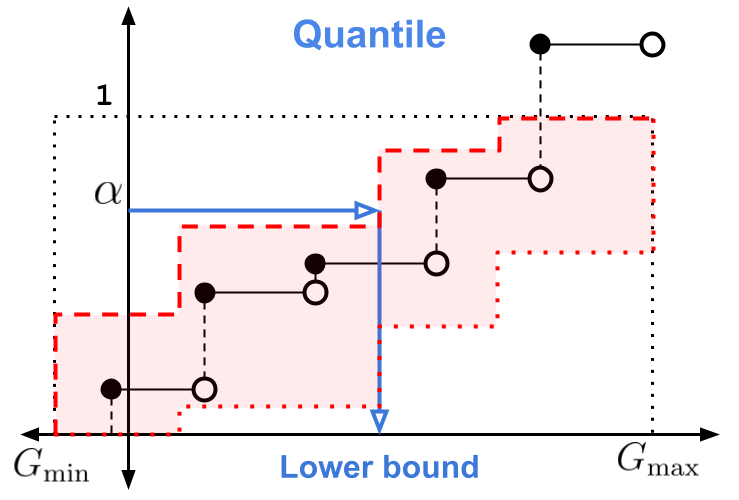}    \includegraphics[width=0.32\textwidth]{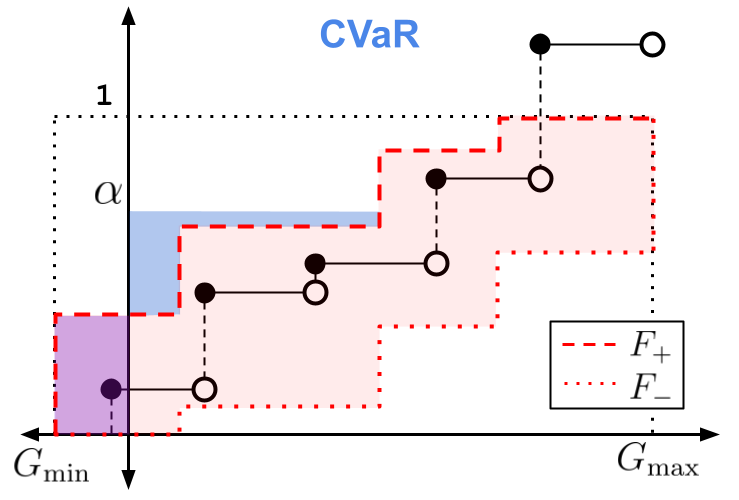}
    \caption{Given a confidence band $\mathcal F$, bounds for many parameters can be obtained using geometry.
    \textbf{(Left)} For a lower bound on the mean, we would want a CDF $F \in \mathcal F$ that assigns as high a probability as possible on lower $G$ values, and $F_+$  is the CDF which does that.
    To obtain the mean of $F_+$, we use the property that the mean of a distribution is the area above the CDF on the positive x-axis minus the area below the CDF on the negative x-axis \citep{anderson1969confidence}. 
    Hence, the mean of the distribution characterized by $F_+$ is the area of the shaded blue region minus the area of the shaded purple region, and this value is the high-confidence lower bound on the mean.
    %
    \textbf{(Middle)} Similarly, 
    within $\mathcal F$, $F_+$ characterizes the distribution with the smallest 
    $\alpha$-quantile.
    \textbf{(Right)} Building upon the lower bounds for the mean and the quantile, \citet{thomas2019concentration} showed that the lower bound for $\alpha$-CVaR can be obtained using the area of the shaded blue region minus the area of the shaded purple region, normalized by $\alpha$.
    To get the upper bounds on the mean, quantile, and CVaR, analogous arguments hold using the lower bound CDF $F_-$.
    See Appendix \ref{apx:UnOoptim} for discussions of variance, inter-quantile, entropy, and other parameters.
    %
    }
    \label{fig:geometry}
\end{figure*} 
\textbf{3.1. Statistical Bootstrapping:}
An important advantage of having constructed an off-policy estimator of any $\psi(F_\pi)$ is that it opens up the possibility of using \textit{resampling}-based methods, like statistical bootstrapping \citep{efron1994introduction}, to obtain \textit{approximate} confidence intervals for $\psi(F_\pi)$.
In particular,
we can use the \textit{bias-corrected and accelerated} (BCa) bootstrap procedure to obtain $\psi_-$ and $\psi_+$ for $\psi(F_\pi)$.
This procedure is outlined in Algorithm \ref{alg:pboot} in Appendix \ref{apx:sec:boot}.

Unlike the bounds from \eqref{eqn:lub}, BCa-based bounds do not offer guaranteed coverage and need to be computed individually for each parameter $\psi$.
%
However, they can be combined with UnO to get significantly tighter bounds with less data, albeit without guaranteed coverage.

\section{Confounding, Distributional Shifts, and Smooth Non-Stationarities}
\label{sec:confounding}
A particular advantage of  UnO is the remarkable simplicity with which the estimates and bounds for $F_\pi$ or $\psi(F_\pi)$ can be extended to account for confounding, distributional shifts, and smooth non-stationarities that are prevalent in real-world applications \citep{dulac2019challenges}.

\textbf{Confounding / Partial Observability:} 
%
Estimator $\hat F_n$ in \eqref{eqn:Festimator}  accounts for partial observability when both $\pi$ and $\beta$ have the same observation set. However, in systems like automated loan approval \citep{thomas2019preventing}, data might have been collected using a behavior policy $\beta$ dependent on sensitive attributes like race and gender that may no longer be allowable under modern laws.
%
%
This can make the available observation, $\widetilde O$, for an evaluation policy $\pi$ different from  the observations, $O$, for  $\beta$, which may also have been a partial observation of the underlying true state $S$. 

However, an advantage of many such automated systems (e.g., online recommendation, automated healthcare, robotics) is the direct availability of behavior probabilities $\beta_i(A|O)$.
In Appendix \ref{apx:proofs}, we provide generalized proofs for all the earlier results, showing that access to $\beta_i(A|O)$  allows UnO to handle various sources of confounding even when $\widetilde{O} \neq O$, without requiring any additional adjustments.
When $\beta_i(A|O)$ is not available, we allude to possible alternatives in Appendix \ref{apx:ass}.


\textbf{Distribution Shifts:} Many practical applications exhibit distribution shifts that might be discrete or abrupt. 
One example is when a medical treatment developed for one demographic is applied to another \citep{gao2020deep}. 
To tackle discrete distributional shifts, let $F^{(1)}_\pi$ and $F^{(2)}_\pi$ denote the CDFs of returns under policy $\pi$ in the first and the second domain, respectively. 
To make the problem tractable, similar to prior work on characterizing distribution shifts \citep{berger2014kolmogorov}, we assume that the Kolmogorov-Smirnov  distance between $F^{(1)}_\pi$ and $F^{(2)}_\pi$ is bounded.
%
%
%
%
\begin{ass}
There exists $\epsilon \geq 0$, such that
        $\underset{\nu \in \mathbb R}{\sup} \left | F^{(1)}_\pi(\nu) - F^{(2)}_\pi(\nu) \right| \leq \epsilon$.
    \thlabel{ass:shift}
\end{ass}
Given data $\mathcal D$ collected in the first domain, one can obtain the bounds $F^{(1)}_-$ and $F^{(1)}_+$ on $F^{(1)}_\pi$ as in Section \ref{sec:Fbounds}.
Now since $F^{(2)}_\pi$ can differ from $F^{(1)}_\pi$ by at most $\epsilon$ at any point, we propose the following  bounds for $F^{(2)}_\pi$ for all $\nu \in \mathbb{R}$ and show that they readily provide guaranteed coverage for $F^{(2)}_\pi$: 
\begin{align}
     F^{(2)}_-(\nu) &\coloneqq \max(0,F^{(1)}_-(\nu) - \epsilon),
    & F^{(2)}_+(\nu) \coloneqq \min(1,F^{(1)}_+(\nu) + \epsilon). \label{eqn:Fshiftbound}
\end{align}

\begin{thm}
Under \thref{ass:support,ass:shift},  $\forall \delta \in (0, 1]$, the confidence band defined by $F^{(2)}_-$ and $F^{(2)}_+$ provides guaranteed coverage for $F^{(2)}_\pi$.
That is, $\Pr (\forall \nu, \,\, F^{(2)}_{-}(\nu) \leq F^{(2)}_\pi(\nu) \leq F^{(2)}_+(\nu) ) \geq 1 - \delta.$
\thlabel{thm:Fguaranteeshift}
\end{thm}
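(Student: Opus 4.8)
The plan is to reduce Theorem~\ref{thm:Fguaranteeshift} to Theorem~\ref{thm:Fguarantee1} by a purely deterministic pointwise argument, conditioning on the high-probability event on which the first-domain band is valid. Concretely, let $E$ be the event $\{\forall \nu \in \mathbb R,\; F^{(1)}_-(\nu) \leq F^{(1)}_\pi(\nu) \leq F^{(1)}_+(\nu)\}$. By Theorem~\ref{thm:Fguarantee1} (applied to the data $\mathcal D$ collected in the first domain, which satisfies \thref{ass:support}), we have $\Pr(E) \geq 1 - \delta$. It therefore suffices to show that $E$ implies the event $\{\forall \nu,\; F^{(2)}_-(\nu) \leq F^{(2)}_\pi(\nu) \leq F^{(2)}_+(\nu)\}$ deterministically, since then the latter event has probability at least $\Pr(E) \geq 1-\delta$.

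So assume $E$ holds and fix an arbitrary $\nu \in \mathbb R$. For the lower bound, \thref{ass:shift} gives $F^{(2)}_\pi(\nu) \geq F^{(1)}_\pi(\nu) - \epsilon \geq F^{(1)}_-(\nu) - \epsilon$, where the second inequality uses $E$. Combining this with the trivial bound $F^{(2)}_\pi(\nu) \geq 0$ (it is a CDF), we get $F^{(2)}_\pi(\nu) \geq \max(0, F^{(1)}_-(\nu) - \epsilon) = F^{(2)}_-(\nu)$. Symmetrically, for the upper bound, \thref{ass:shift} gives $F^{(2)}_\pi(\nu) \leq F^{(1)}_\pi(\nu) + \epsilon \leq F^{(1)}_+(\nu) + \epsilon$, and since $F^{(2)}_\pi(\nu) \leq 1$, we obtain $F^{(2)}_\pi(\nu) \leq \min(1, F^{(1)}_+(\nu) + \epsilon) = F^{(2)}_+(\nu)$. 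Since $\nu$ was arbitrary, $E$ implies the desired uniform containment, which completes the argument.

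The one subtlety worth flagging — and the only place any care is needed — is the direction in which \thref{ass:shift} is used: it bounds $|F^{(1)}_\pi - F^{(2)}_\pi|$ uniformly, so it bounds the deviation both ways, and in particular controls $F^{(2)}_\pi$ in terms of $F^{(1)}_\pi$ even though the data comes from domain~1. The clipping to $[0,1]$ is not essential for coverage (the unclipped bounds already contain $F^{(2)}_\pi$) but only tightens the band, so it is safe to include. I do not anticipate a genuine obstacle here; the statement is essentially a one-line consequence of the triangle-type inequality in \thref{ass:shift} plus Theorem~\ref{thm:Fguarantee1}, and the main thing to get right is simply being explicit that the whole reduction is deterministic once we condition on $E$, so that no union bound or additional probabilistic accounting is incurred and the confidence level $1-\delta$ is preserved exactly.
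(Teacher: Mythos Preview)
Your proposal is correct and follows essentially the same approach as the paper: invoke \thref{thm:Fguarantee1} for the first-domain band, use \thref{ass:shift} to transfer the bounds from $F^{(1)}_\pi$ to $F^{(2)}_\pi$ pointwise, and then note that the clipping to $[0,1]$ only tightens the band. Your version is slightly more explicit about conditioning on the event $E$ and handling the $\max/\min$ clipping, but the logic is identical to the paper's.
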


\textbf{Smooth Non-stationarity:} 
The stationarity assumption is unreasonable for applications like online tutoring or recommendation systems, which must deal with drifts of students' interests or seasonal fluctuations of customers' interests \citep{thomas2017predictive,theocharous2020reinforcement}.
In the worst case, however, even a small change in the transition dynamics can result in a large fluctuation of a policy's performance and make the problem intractable.
Therefore, similar to the work of \citet{chandak2020towards}, we assume that the distribution of returns for any $\pi$ changes smoothly over the past episodes $1$ to $L$, and the $\ell$ episodes in the future.
In particular, we assume that the trend of $F^{(i)}_\pi(\nu)$ for all $\nu$ can be modeled using least-squares regression using a nonlinear basis function $\phi : \mathbb R \rightarrow \mathbb R^d$ (e.g., the Fourier basis, which is popular for modeling non-stationary trends \citep{bloomfield2004fourier}).
\begin{ass}
    For any $\nu$, $\exists w_\nu \in \mathbb{R}^d$, such that,
         $\forall i \in [1,L+\ell], \,\,\,\, F^{(i)}_\pi(\nu) = \phi(i)^\top w_\nu.$
    \thlabel{ass:ns}
\end{ass}
Estimating $F_\pi^{(L+\ell)}$ can now be seen as a time-series forecasting problem.
Formally, for any key point $\kappa$, let $\hat F_n^{(i)}(\kappa)$ be the estimated CDF using $H_i$ observed in episode $i$.
From \thref{thm:Funbiased}, we know that $\hat F_n^{(i)}(\kappa)$ is an unbiased estimator of $F^{(i)}_\pi(\kappa)$;
 therefore, $(\hat F_n^{(i)}(\kappa))_{i=1}^L$ is an unbiased estimate for the underlying time-varying sequence $(F_\pi^{(i)}(\kappa))_{i=1}^L$.
 Now, using methods from time-series literature, the trend of $(\hat F_n^{(i)}(\kappa))_{i=1}^L$ can be analyzed to forecast $F^{(L+\ell)}_\pi(\kappa)$, along with its $\texttt{CI}$s. 
 In particular, we propose using \textit{wild bootstrap} \citep{mammen1993bootstrap,davidson2008wild}, which provides \textit{approximate} $\texttt{CI}$s with finite sample error of $O(L^{-1/2})$ while also handling 
 non-normality and heteroskedasticity, which would occur when dealing with IS-based estimates resulting from different behavior polices  \citep{chandak2020towards}. 
 See Appendix \ref{apx:sec:NSboot} for more details.
Finally, using the bounds obtained using wild bootstrap at multiple key points, an entire confidence band can be obtained as discussed in Section \ref{sec:Fbounds}.

\section{Empirical Studies}

In this section, we provide empirical support for the established theoretical results for the proposed UnO estimator and high-confidence bounds.
To do so, we use the following domains: \textbf{(1)} An open source implementation \citep{xie2020deep} of the FDA-approved type-$1$ diabetes treatment simulator \citep{man2014uva}, \textbf{(2)} A stationary and a non-stationary recommender system domain, and \textbf{(3)} A continuous-state Gridworld with partial observability, where data is collected using multiple behavior policies.
Detailed description for domains and the procedures for obtaining $\pi$ and $\beta$ are provided in Appendix \ref{apx:empiricaldomain}; code is also publicly available \href{https://github.com/yashchandak/UnO}{here}.
%
%
In the following, we discuss four primary takeaway results. 

\textbf{(A) Characteristics of the UnO estimator: } Figure \ref{fig:results} reinforces the universality of UnO. 
As can be seen, UnO can accurately estimate the entire CDF and a wide range of its parameters: mean, variance, quantile, and CVaR.
%
%

\begin{figure*}[!ht]
    \centering
    \includegraphics[width=0.306\textwidth]{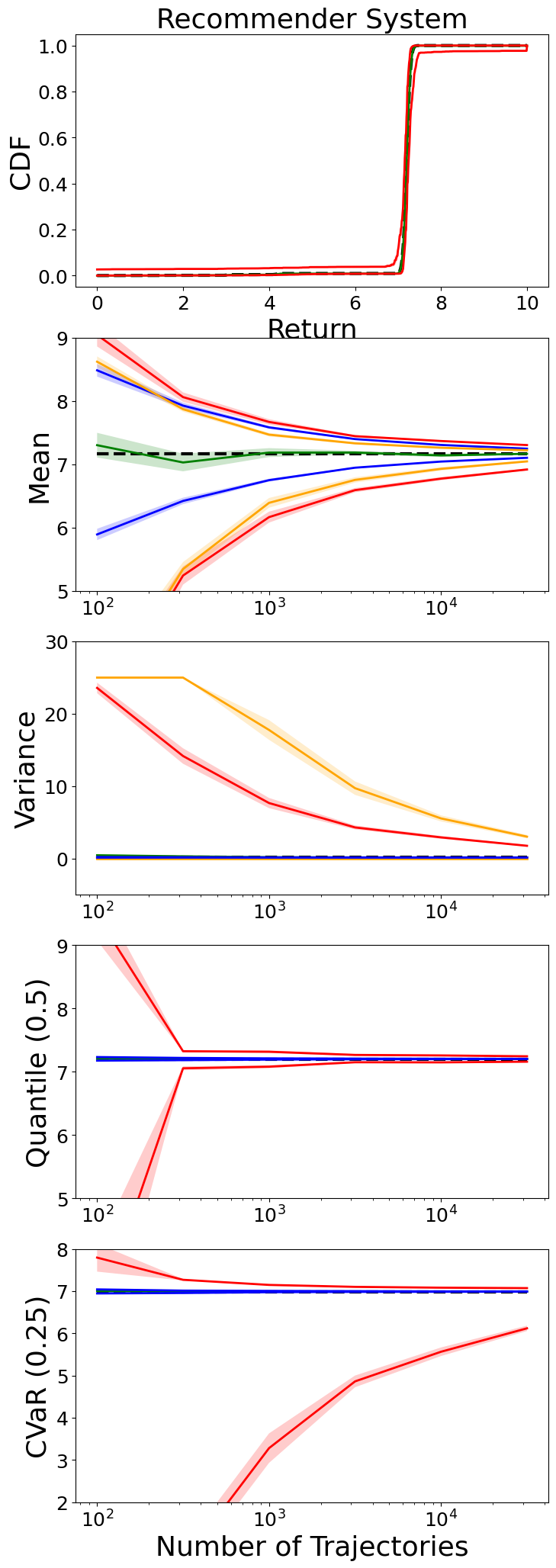}
    \includegraphics[width=0.29\textwidth]{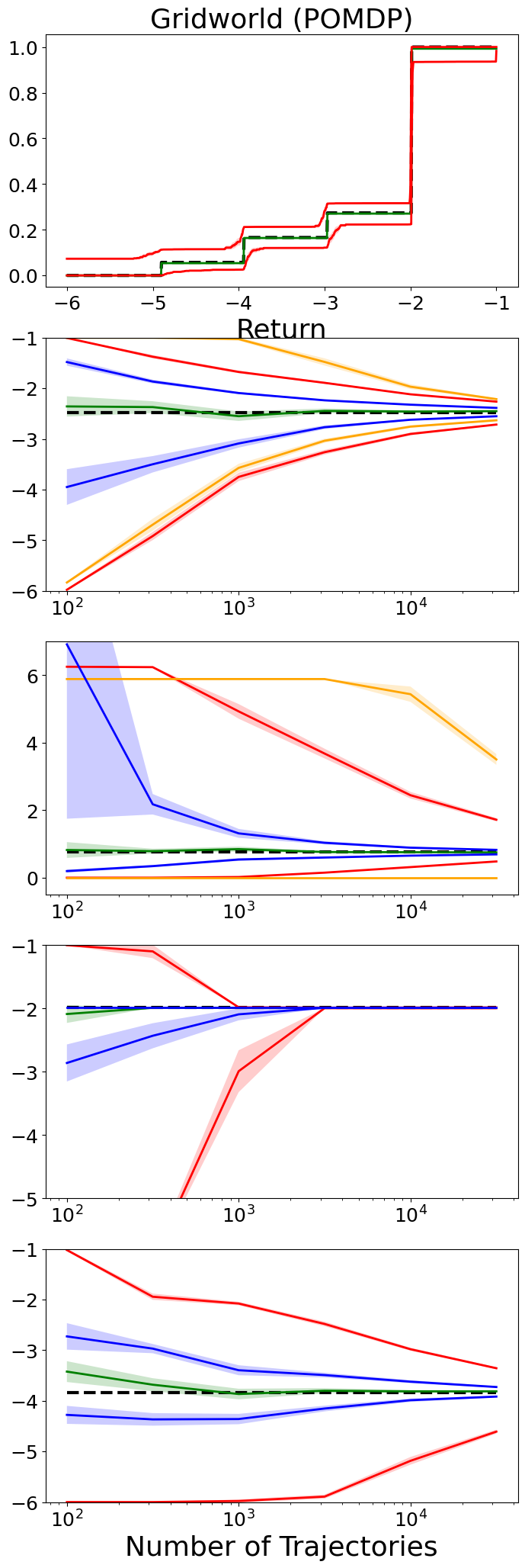}    \includegraphics[width=0.295\textwidth]{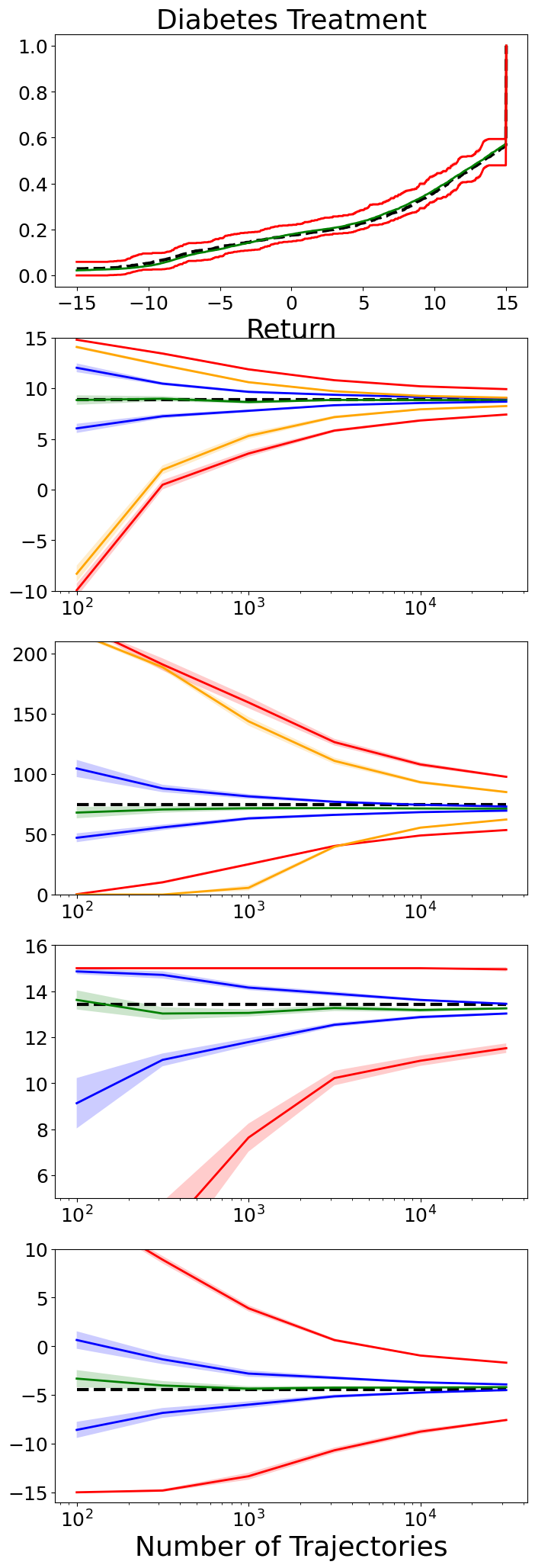}
    \\
    \includegraphics[width=0.6\textwidth]{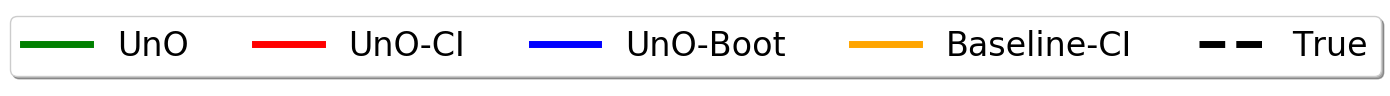}
    \caption{
    Performance trend of the proposed estimators and bounds on three domains.
    %
    The black dashed line is the true value of $F_\pi$ or $\psi(F_\pi)$, green is our UnO estimator, red is our $\texttt{CI}$-based UnO bound,  blue is the bootstrap version of our UnO bound, and yellow is the baseline bound for the mean \citep{thomas2015higha} or variance \citep{chandak2021hcove}.
    Each bound has two lines (upper and lower); however, some are not visible due to overlaps.
    The shaded regions are $\pm 2$ standard error, 
    computed using 30 trials.
    The plots in the top row are for CDFs obtained using $3\times10^{4.5}$ samples.
    The next four rows are for different parameters and share the same x-axis.
    Bounds were obtained for a failure rate $\delta = 0.05$.
    Since the UnO-Boot and Baseline-CI methods do not hold simultaneously for all the parameters, they were made to hold with failure rate of $\delta/4$ for a fair comparison (as there are 4 parameters in this plot).
    }
    \label{fig:results}
\end{figure*}

\textbf{(B) Comparison of UnO with prior work:} Recent works for bounding the mean \citep{jiang2020minimax,feng2021nonasymptotic} assume no confounding and Markovian structure. Therefore, for a fair comparison, we resort to the method of \citet{thomas2015higha} that can provide tight bounds even when the domain is non-Markovian or has confounding (partial observability). 
Perhaps surprisingly, Figure \ref{fig:results} shows that the proposed guaranteed coverage bounds, termed \textit{UnO-CI} here, can be competitive with this existing specialized bound, termed \textit{Baseline-CI} here, for the mean. 
In fact, UnO-CI can often require an order of magnitude less data compared to the specialized bounds for variance \citep{chandak2021hcove}; 
we refer readers to Appendix \ref{apx:empiricalS} for a discussion on potential reasons. 
This suggests that the universality of UnO can be beneficial even when only one specific parameter is of interest. 

\begin{figure}[t]
    \centering
    \includegraphics[width=0.305\textwidth]{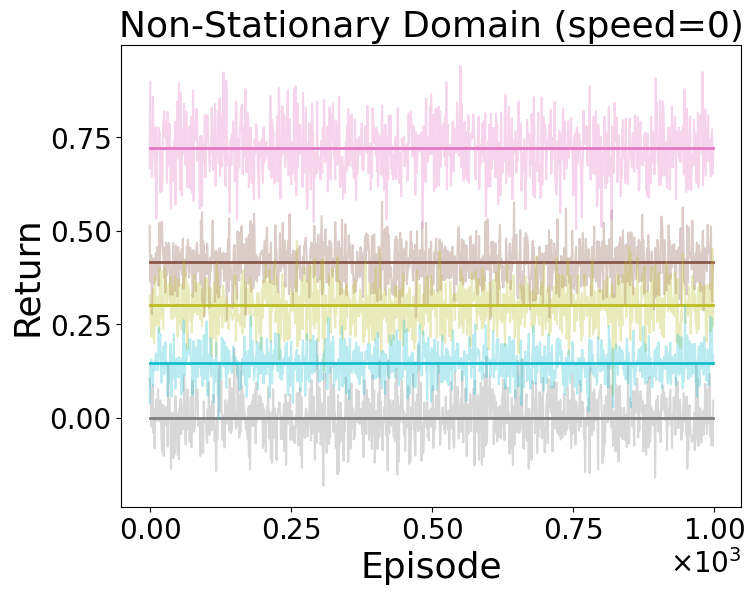} 
    \includegraphics[width=0.3\textwidth]{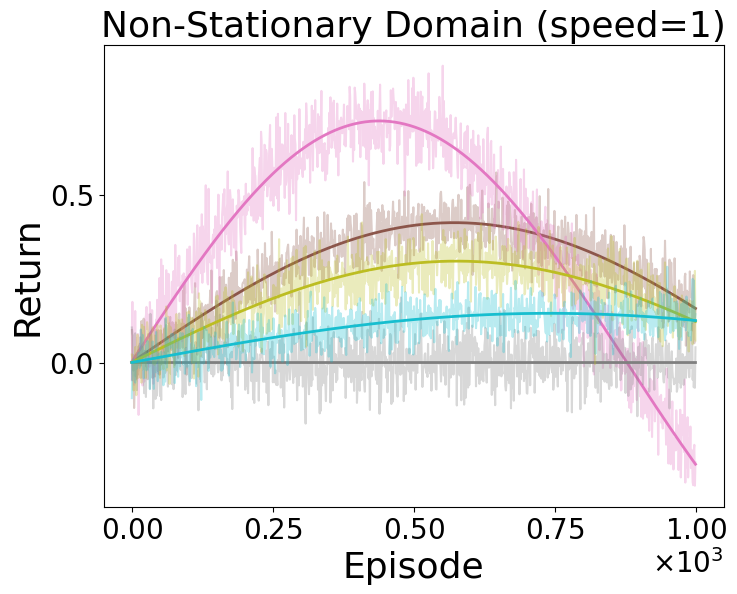} 
    \includegraphics[width=0.31\textwidth]{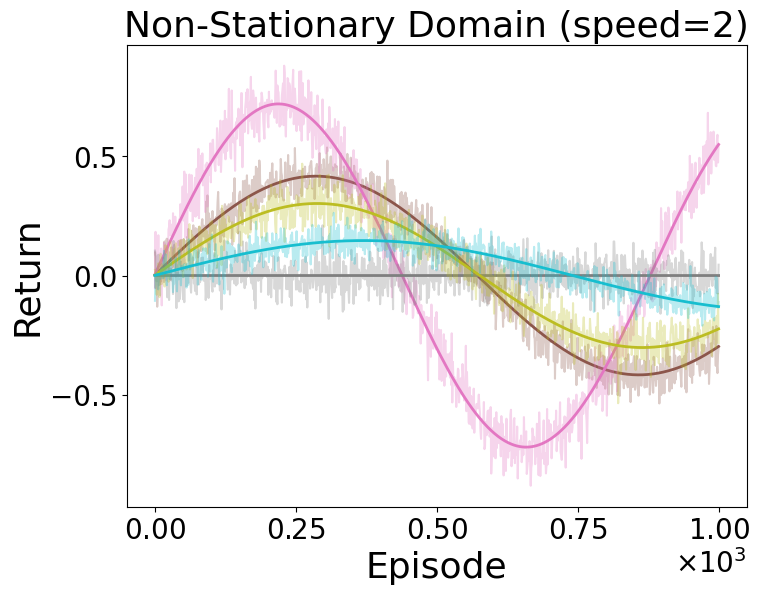}
    \\
    \includegraphics[width=0.4\textwidth]{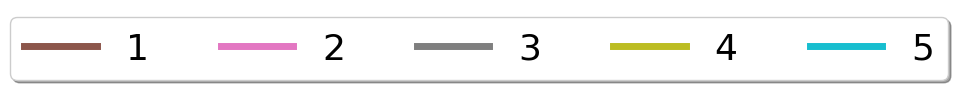}
    \\
    \includegraphics[width=0.3\textwidth]{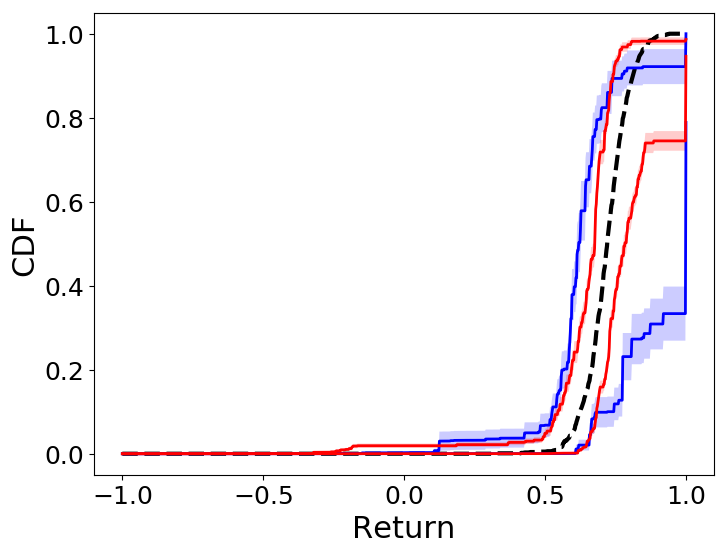}
    \includegraphics[width=0.3\textwidth]{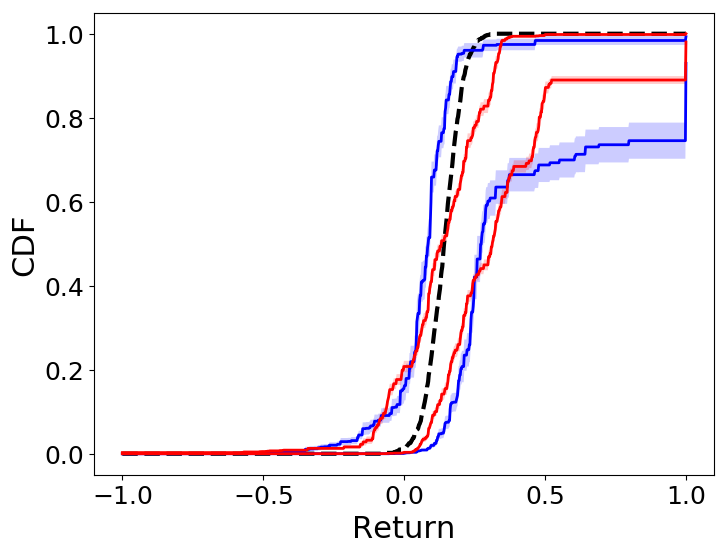}
    \includegraphics[width=0.3\textwidth]{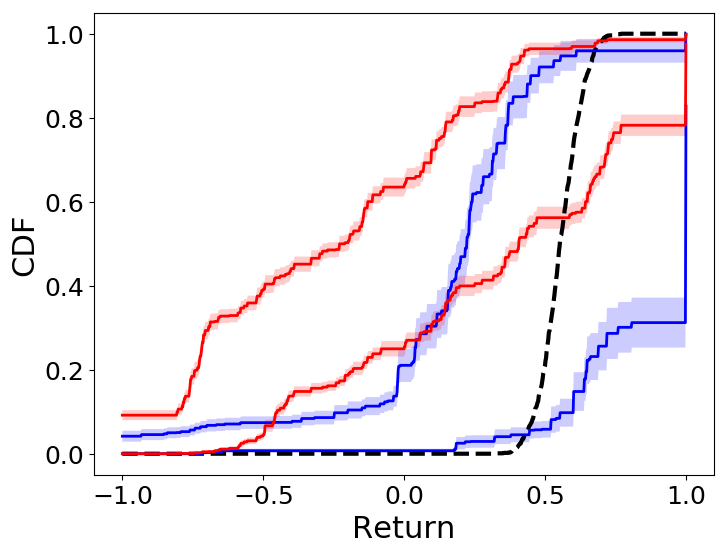}
    \\
    \includegraphics[width=0.4\textwidth]{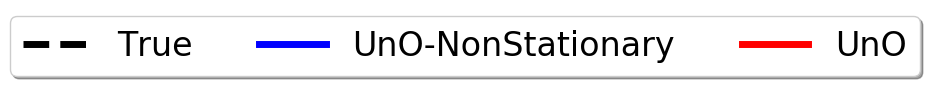}
    \caption{ \textbf{(Top row)} True rewards (unknown to the RL agent) associated with each of the five items over the past $1000$ episodes for different \textit{speeds} of non-stationarity. Speed of $0$ indicates stationary setting and higher speeds indicates greater degrees of non-stationarity.
    \textbf{(Bottom row)}. The black dashed line is the true value of the future distribution of returns under $\pi$: $F^{(L+\ell)}_\pi$, where $L=1000$ and $\ell = 1$. In red is our UnO bound that does not account for non-stationarity, and in blue is the wild-bootstrap version of our UnO bound that accounts for non-stationarity.
    The shaded region corresponds to one standard error computed using 30 trials.
    Bounds were obtained for a failure rate $\delta = 0.05$.
    \textbf{(Left column)} In the stationary setting, both the variants of UnO bounds approximately contain the true future CDF $F_\pi^{(L+\ell)}$. In this setting, the UnO method designed only for stationary settings provides a tighter bound.
    \textbf{(Middle \& Right columns)} As the domain becomes non-stationary, UnO bounds that do not account for non-stationarity fail to adequately bound the true future CDF $F_\pi^{(L+\ell)}$. 
    When the degree of non-stationarity is high, not accounting for non-stationarity can lead to significantly inaccurate bounds. 
    By comparison, UnO bounds that use wild bootstrap to tackle non-stationarity provide a more accurate bound throughout.
    As expected, when the fluctuations due to non-stationarity increase, the width of the confidence band increases as well.
    These results illustrate (a) the importance of accounting for non-stationarity, when applicable, and (b) the flexibility offered by our proposed universal off-policy estimator, UnO, to tackle such settings. 
    \vspace{-10pt}
    }
    \label{fig:nsresults}
\end{figure}

\textbf{(C) Finite-sample confidence bounds for other parameters using UnO: }
Figure \ref{fig:results} demonstrates that UnO-CI also successfully addresses the open question of  providing guaranteed coverage bounds for multiple parameters simultaneously without additional applications of the union bound. 
As expected, bounds for parameters like variance and CVaR that depend heavily on the distribution tails take more samples to shrink than bounds on other parameters (like the median [quantile($0.5$)]).
Additional discussion on the observed trends for the bounds is provided in Appendix \ref{apx:empiricalS}.

The proposed UnO-Boot bounds, as discussed in Section 3.1, are approximate and might not always hold with the specified probability.
 %
 However, they stand out by providing \textit{significantly} tighter, and thus more practicable, confidence intervals.

\textbf{(D) Results for non-stationary settings:  }
\label{apx:empiricalNS}
%
Results for this setting are presented in Figure \ref{fig:nsresults}.
As discussed earlier, online recommendation systems for tutorials, movies, advertisements and other products are ubiquitous.
However, the popular assumption of stationarity is seldom applicable to these systems.
In particular, personalizing for each user is challenging in such settings as interests of a user for different items among the recommendable products fluctuate over time.
For an example, in the context of online shopping, interests of customers can vary based on seasonality or other unknown factors.
To abstract such settings, in this domain the reward (interest of the user) associated with each item changes over time.
See Figure \ref{fig:nsresults} (top row) for visualization of the domain, for different  ``speeds'' (degrees of non-stationarity). 
%

%
%

In all the settings with different speeds, a uniformly random policy was used as a behavior policy $\beta$ to collect data for $1000$ episodes.
To test the efficacy of UnO, when the future domain can be different from the past domains, the evaluation policy was chosen to be a near-optimal policy for the future episode: $1000+1$.

 \section{Conclusion} 

We have taken the first steps towards developing a \emph{\underline{un}iversal \underline{o}ff-policy estimator} (UnO), closing
 the open question of whether it is possible to estimate and provide finite-sample bounds (that hold with high probability) for \textit{any} parameter of the return distribution in the \textit{off-policy} setting, with minimal assumptions on the domain. 
Now, without being restricted to the most common and
basic parameters, researchers and practitioners can  
fully characterize the (potentially dangerous or costly) behavior of a policy without having to deploy it.
 There are many new questions regarding how UnO can be improved for policy \textit{evaluation} by further reducing data requirements or weakening assumptions. 
 Using UnO for policy \textit{improvement} also remains an interesting future direction.
 Subsequent to this work, 
 \citet{huang2021offpolicy} showed how models can be used to obtain UnO-style doubly robust estimators along with its convergence rates in the contextual bandit setting. This allows their method to also provide finite-sample uniform CDF bounds for a broad class of Lipschitz risk functionals.

\section{Acknowledgements}

We thank Shiv Shankar, Scott Jordan, Wes Cowley, and Nan Jiang for the feedback, corrections, and other contributions to this work. We would also like to thank Bo Liu, Akshay Krishnamurthy, Marc Bellemare, Ronald Parr, Josiah Hannah, Sergey Levine, Jared Yeager, and the anonymous reviewers for their feedback on this work.


Research reported in this paper was sponsored in part by a gift from Adobe, NSF award \#2018372, and the DEVCOM Army Research Laboratory under Cooperative Agreement W911NF-17-2-0196 (ARL IoBT CRA).  Research in this paper is also supported in part by NSF (IIS-1724157, IIS-1638107, IIS-1749204, IIS-1925082), ONR (N00014-18-2243), AFOSR (FA9550-20-1-0077), and ARO (78372-CS, W911NF-19-2-0333). The views and conclusions contained in this document are those of the authors and should not be interpreted as representing the official policies, either expressed or implied, of the Army Research Laboratory or the U.S.~Government. The U.S.~Government is authorized to reproduce and distribute reprints for Government purposes notwithstanding any copyright notation herein.




 \typeout{}
\bibliography{neurips_bib}
\bibliographystyle{abbrvnat}


\clearpage
\onecolumn
\setcounter{thm}{0}
\setcounter{ass}{0}

\appendix

\section{Notation}

\begin{table}[h]
    \centering
    \begin{tabular}{c|l c}
    \hline \\
    Symbol & Meaning \\
    \hline \\
    $\mathcal D$ & Data set of the observed trajectories\\
    $n$ & Total number of observed trajectories in $\mathcal D$\\
    $\pi$ & Evaluation policy\\
    $\beta_i$ & Behavior policy for the $i^{\text{th}}$ trajectory\\
    $\rho_i$ & Importance ratio for the observed trajectory $H_i$\\
     $\mathcal S$    & State set\\
     $\mathcal O$, 
     $\widetilde{\mathcal O}$     & Observation set for the behavior policy and the evaluation policy, respectively\\
    $\mathcal A$     & Action set\\
    $\mathcal P$ & Transition dynamics, $\mathcal P : \mathcal S \times \mathcal A \rightarrow \Delta(\mathcal S)$\\
    $\mathcal R$ & Reward function, $\mathcal R: \mathcal S \times \mathcal A \rightarrow \Delta(\mathbb R)$\\
    $\Omega$ & Observation function for behavior policy, $\Omega : \mathcal S \rightarrow \Delta(\mathcal O)$\\
    $\Omega_2$ & Observation function for the evaluation policy, $\Omega_2: \mathcal S \times \mathcal O \rightarrow \Delta(\widetilde {\mathcal O})$\\
    $\gamma$ & Discounting factor\\
    $d_0$ & Starting state distribution\\
    $T$ & Finite horizon length\\
    $H_i$, $H_\pi$ & $i^\text{th}$ observed trajectory in the dataset and complete trajectory under policy $\pi$, respectively\\
    $G_i$, $G_\pi$ & Return observed in the $i^\text{th}$ trajectory in the dataset and return under any policy $\pi$, respectively\\
    $G_{\min}$, $G_{\max}$  & Minimum and maximum value of a return, respectively\\
    $F_\pi, \mathrm{d}F_\pi$ & True CDF  of returns under policy $\pi$ and its associated probability distribution, respectively\\
    $\hat F_n$, $\bar F_n$ & Off-policy CDF estimator and weighted off-policy CDF estimator using $n$ samples, respectively\\
    $F_-, F_+$ & Lower and upper bound on the CDF\\
    $\mathcal F$ & The set of all CDFs between the upper bound and the lower bounds\\
    $\kappa_i, K$ & $i^\text{th}$ key point and total number of key points, respectively\\  
    $\alpha$ & Value for defining inverse CDF-based statistics \\
    $\psi$ & Generic functional for a distributional parameter/statistic\\
    $\psi_-, \psi_+$ & Lower and upper bounds for $\psi(F_\pi)$ \\
    $\delta$ & Failure rate for the bounds\\
    $\mathcal D_\text{eval}, \mathcal D_\text{train}$ & Evaluation and training split of the dataset $\mathcal D$\\
    $\texttt{CI}_-, \texttt{CI}_+$ & Lower and upper confidence bounds for a given random variable\\
    $\theta$ & Parameters that are used to construct $\mathcal F$\\
    $\mathscr A$ & Euclidean area enclosed within $\mathcal F$\\
    $X_i^*$ & $i^\text{th}$ bootstrap resampled value for any random variable $X$\\
    $\varepsilon$, $\epsilon$ & Some small value in \thref{ass:support} and \thref{ass:shift}, respectively\\
    $w_\nu$, $\phi$  & Regression weights and basis function for the assumption on smooth non-stationarity\\
    $L, \ell$ & Number of past and future episodes being considered in the smooth non-stationary setting
    
    \end{tabular}
    \caption{List of symbols used in the main paper and their associated meanings.}
    \label{tab:my_label}
\end{table}

\section{Broader Impact}
\label{apx:broader}

While our estimators and bounds are both theoretically sound and intuitively simple, it is important for a broader audience to understand the limitations of our method, assumptions being made, and what can be done when these assumptions do not hold.
Understanding these assumptions can also help in mitigating any undesired biases in applications built around UnO and can thus avoid any potential negative societal impacts. 
In the following, we briefly allude  to possible alternatives when the required  assumptions are violated.
%
%
%

\subsection{Discussion of Assumptions and Requirements of UnO}
\label{apx:ass}

\paragraph{Knowledge of Subset Support:} Through \thref{ass:support}, UnO requires that all the behavior policies $(\beta_i)_{i=1}^n$ have sufficient support for actions that have non-zero probability under $\pi$.
Particularly, it requires that the $\beta(a|o)$ is bounded below by (an unknown) $\epsilon$ when $\pi(a|o) > 0$.
This ensures that importance ratios are bounded and thus simplifies analysis for UnO's consistency results and constructing confidence intervals.
This assumption is common both in the off-policy literature \citep{kallus2020double, xie2019towards, yang2020offline} and in real applications \citep{theocharous2015personalized}

The above assumption is also equivalent to assuming bounded exponentiated-Renyi-divergence  (for $\alpha=\infty$) between the probability distributions of trajectories under the behavior and the evaluation policies  \citep{metelli2020importance}.
As the UnO's bound for the CDF uses CIs for the mean as a sub-routine, the above assumption can be relaxed by using CIs for the mean that depend on Renyi-divergence for other values of  $\alpha$ \citep{metelli2020importance}.
Similarly, consistency results for UnO rely upon finite variance, which can also be achieved by instead assuming that the Renyi-divergence is bounded for $\alpha=2$.

Alternatively, \thref{ass:support} can be relaxed to only absolute continuity by using methods that provide valid CIs for the mean by clipping the  importance weights. (See the work by \citet[Theorem 1]{thomas2015higha} for removal of the upper bound on the importance weights when lower-bounding the mean, and the work by \citet[Theorem 5]{chandak2021hcove} for removal of the upper bound on the importance weights when upper-bounding the mean).
Furthermore, prior work has also shown how even the assumption of absolute continuity can in some cases be removed (See discussion around Eqn 8 in the appendix of the work by \citet{thomas2015higha}).
 If the supports for the behavior and the evaluation policies are unequal, \citet{thomas2017importance} also present a technique to reduce variance resulting from IS.

Further, WIS might also be helpful in relaxing the assumptions on the IS ratios.  Specifically, WIS-based mean bounds \citep{kuzborskij2020confident} can also be used along with the WIS-based UnO estimator \eqref{eqn:WISF} to get a valid confidence band for the entire CDF.

Using multi-importance sampling (MIS), the subset support requirement for \textit{all}  $(\beta_i)_{i=1}^n$ can be relaxed to the requirement that the \textit{union of supports} under the behavior policies $(\beta_i)_{i=1}^n$ has sufficient support \citep{veach1995optimally,papini2019optimistic,metelli2020importance}.
MIS can also help in substantially reducing variance.
However, this relaxation requires an alternate assumption that a complete knowledge of all the behavior policies $(\beta_i)_{i=1}^n$, not just the probabilities of the action executed using them, is available.  

\paragraph{Knowledge of Action Probabilities under Behavior Policies $(\beta_i)_{i=1}^n$:}
 UnO requires access to the probability $\beta(a|o)$ (only the scalar probability value and not the entire policy $\beta$) of the actions available in the data set, $\mathcal D$, to compute the importance sampling ratios in \eqref{eqn:Festimator}.
Access to the probability $\beta(a|o)$ is often available when  $\mathcal D$ is collected using an automated policy; however, it might not be available in some cases, such as when decisions were previously made by humans. 

When the probability $\beta(a|o)$ is not available, one natural alternative is to estimate it from the data and use this estimate of $\beta(a|o)$ in the denominator of the importance ratios.
This technique is also known as regression importance sampling (RIS) and is known to provide biased but consistent estimates for the mean \citep{hanna2019importance, pavse2020reducing} in the Markov decision process setting (MDP) setting.
For UnO, $\hat F_n(\nu)$ is analogous to mean estimation of $X \coloneqq \rho \big(\mathds{1}_{\{G\leq \nu\}} \big)$, for any $\nu$.
Therefore, the findings of RIS can be directly extended to UnO in the MDP setting, where $\widetilde {O} = O = S$.
In the following, we provide a high-level discussion for the setting when $\beta(a|o)$ is \textit{not} available and the states are partially observed,
%
\begin{itemize}[leftmargin=*]
    \item \textbf{Partial observability with $\widetilde O = O$:} In this setting, as $\beta(a|o) = \beta(a|\tilde o)$, one can use density estimation on the available data, $\mathcal D$, to construct an estimator $\hat \beta(a|o)$ of $\Pr(a|\tilde o) = \beta(a|\tilde o)$ and use RIS to get a biased but consistent estimator for $F_\pi$.
    Here, bias results from the estimation error in $\hat \beta(a|o)$ but consistency follows as the true $\beta(a|o)$ can be recovered in the limit when $n \rightarrow \infty$. 

    In context of UnO, using $\hat \beta(a|o)$ instead of $\beta(a|o)$ violates the unbiased condition for $\hat F_n$, which was necessary to obtain the $\texttt{CI}$s and construct $\mathcal F$. 
    Therefore, high-confidence bounds with guaranteed coverage cannot be obtained using UnO in this setting.
    However, point estimates and approximate bootstrap bounds can still be obtained.
    \item\textbf{Partial observability with $\widetilde O \neq O$:} In this setting, using RIS will produce neither an unbiased nor a consistent estimator for $F_\pi$.
    As $\mathcal D$ only has $\tilde o$ and not $o$, at best it is only possible to estimate $\Pr(a|\tilde o) = \sum_{x \in \mathcal O} \beta(a|x)\Pr(x|\tilde o)$ through density estimation using data $\mathcal D$.  
    However, in general, since $\beta(a|o) = \Pr(a|o) \neq \Pr(a|\tilde o)$ we cannot even consistently estimate the denominator for importance sampling unless some other stronger assumptions are made.
    See work by \citet{namkoong2020off,tennenholtz2020off,bennett2020off} and \citet{kallus2020confounding} for possible alternative assumptions and approaches to tackle this setting.
\end{itemize}

\paragraph{Knowledge of $G_{\min}, G_{\max}$: } To construct the CDF band $\mathcal F$, UnO requires knowledge of $G_{\min}$ and $G_{\max}$ in \eqref{eqn:Fband2}.
Notice from Figure \ref{fig:Fband} that knowing $G_{\max}$ helps in clipping the \textit{\underline{l}ower \underline{b}ound} for the \textit{\underline{u}pper \underline{t}ail} (LBUT) of $\mathcal F$, which otherwise would have extended to $+\infty$.
Similarly, knowing $G_{\min}$ helps in clipping the \textit{{\underline{u}pper \underline{b}ound}} for the \textit{{\underline{l}ower \underline{t}ail}} (UBLT) of $\mathcal F$, which otherwise would have extended to $-\infty$.

Typically, even if $G_{\min}$ or $G_{\max}$ is not known, they can be obtained as $R_{\min}/(1-\gamma)$ or $R_{\max}/(1-\gamma)$, respectively, where $R_{\min}$ and $R_{\max}$ are known finite lower and upper bounds for any individual reward. 
Otherwise, knowledge of $G_{\min}$ or $G_{\max}$ can be relaxed if the desired bound on $\psi$ does not depend on UBLT or LBUT, respectively.
For example, observe from Figure \ref{fig:geometry} that
    (a) The lower bound for the mean or quantile does not depend on LBUT. Analogously, if only an upper bound for the mean or quantile is required, then UBLT is not needed. 
    %
    (b) The lower bound on CVaR depends on UBLT, however, (for small values of $\alpha$) the upper bound on CVaR neither depends on LBUT nor UBLT.
    %
    (c) For an upper bound on variance, both LBUT and UBLT are required. However, for the variance's lower bound, neither LBUT nor UBLT are required. See Figure \ref{apx:fig:geometry} for intuition.

\paragraph{Knowledge of Function Class $\phi$: } For the smoothly non-stationary setting, through \thref{ass:ns}, UnO requires access to the basis functions $\phi$ that can be used with least-squares regression to analyze the trend in the distributions of returns $(F_\pi^{(i)}(\nu))_{i=1}^L$ for any $\nu \in \mathbb R$.
In practice, one can use  sufficiently flexible basis functions to model time-series trends (e.g., Fourier basis \citep{bloomfield2004fourier}).
To avoid overfitting or underfitting, one could also use goodness-of-fit tests to select the functional class $\phi$ for the trend \citep{chen2003empirical}.

\paragraph{Knowledge of Bound $\epsilon$ on the Distribution Shift: } Unlike the smoothly non-stationary setting, if the underlying shift can be discrete and arbitrary, prior data may not contain any useful information towards characterizing the shift. Therefore, avoiding domain knowledge may be inevitable when setting the value for $\epsilon$ unless some other stronger assumptions are made.

  
 %

\section{Extended Discussion on Related Work}
\label{apx:related}

In the on-policy RL literature, parameters other than the mean have also been explored \citep{jaquette1973markov,sobel1982variance,chung1987discounted, white1988mean,dearden1998bayesian,lattimore2012pac,azar2013minimax}, and recent distributional RL methods extend this direction by estimating the entire distribution of returns \citep{morimura2010nonparametric,morimura2012parametric,bellemare2017distributional,dabney2018implicit,dabney2018distributional,dabney2020distributional,rowland2018analysis}.
Our work builds upon many of these ideas and extends them to the off-policy setting.

In the off-policy RL setup, there is a large body of literature that tackles the off-policy mean estimation problem \citep{precup2000eligibility,SuttonBarto2}.
Some works also aim at providing high-confidence off-policy mean estimation  using concentration inequalities \citep{thomas2015higha,kuzborskij2020confident} or bootstrapping \citep{thomas2015highb, hanna2017bootstrapping,kostrikov2020statistical}.
Several recent approaches build upon a dual perspective for dynamic programming \citep{puterman1990markov,wang2007dual,nachum2020reinforcement} for both estimating and bounding the mean \citep{liu2018breaking, xie2019towards,jiang2020minimax,uehara2020minimax,dai2020coindice,feng2021nonasymptotic}. 
However, these methods are restricted to domains with Markovian dynamics and full observability. 
Some works have also focused on estimating the mean return in 
the setting where states are partially observed \citep{namkoong2020off,tennenholtz2020off,kallus2020confounding} or when there is non-stationarity \citep{chandak2020towards,chandak2020optimizing,  khetarpal2020towards, padakandla2020survey}.
Recent work by \citet{chandak2021hcove} also looks at (high-confidence) off-policy variance estimation. 
Our work extends these research directions by tackling these settings simultaneously, while also 
providing a general procedure to estimate and 
obtain high-confidence bounds for \textit{any} parameter of the distribution of returns.
Particularly, UnO is a single, unified, and universal procedure that can be used to mitigate the complexity associated with estimating different parameters for different domain settings.

%
%
%

A popular RL method that has similar name to UnO is the \textit{Universal value function approximator} (UVFA) by  \citet{schaul2015universal}. However, UVFA is fundamentally different from UnO: UVFA estimates \textit{expected} return $\mathbb{E}[G_\pi]$ from a state given any desired goal.
By comparison, UnO estimates any parameter of the return $G_\pi$ for a single ``goal''.
Recent work by \citet{harb2020policy} and \citet{faccio2020parameter} propose using supervised learning to estimate parametric models that can map a \textit{representation} of a policy $\pi$ to the corresponding distribution of $G_\pi$.
By training over a given distribution of policies, new policies in the test set can be evaluated without using new data.
By comparison, UnO does not requires any parametric assumptions or any train-test distribution.
Further, UnO also provides high-confidence bounds for all the parameters of the return distribution. 

\section{Proofs for Theoretical Results}
\label{apx:proofs}

The main results in this paper are for the setting where both the evaluation and the behavior policies have the same observation set.
In the following, we present generalized results where the available observations, $\widetilde{O}$, for the evaluation policy can be different from the behavior policy's observations, $O$.
Further, for notational ease, in the main paper we had focused only on finite sets.
In the following, we present a more general setting where states, actions, observations, and rewards are all continuous.
Let $\Omega_2: \mathcal S \times \mathcal O \rightarrow \Delta(\widetilde {\mathcal O})$ be the distribution over $\widetilde{\mathcal O}$, conditioned on state $s \in \mathcal S$ and observation $o \in \mathcal O$, which determines how the observations $\widetilde O$ are generated.

Let $\mathcal D = (H_i)_{i=1}^n$ be the available observed trajectories, where each $H$ contains $(\widetilde O_{0}, A_{0}, \beta(A_{0}|O_{0}), R_{0}, \widetilde O_{1}, ...)$. 
Note that when the random variables  $\widetilde{O} =  O = S$, we recover a standard fully observable MDP setting.
By comparison, $H_\pi$ is the random variable corresponding to the complete trajectory $(S_0, O_0, \widetilde O_{0}, A_{0},  R_{0}, S_1,  O_1, \widetilde O_{1}, ...)$ under any policy $\pi$. 
Of course, $H_\pi$ is unknown.
To make the dependence between a trajectory $h \in \mathscr H_\pi$ and its associated return $G$ and importance ratios $\rho$ explicit, we use the shorthand $g(h)$ and $\rho(h)$ to denote the return and importance ratios for the full trajectory $h$, respectively. 
To tackle this generalized setting, we also generalize the support assumption introduced earlier,

\begin{ass}
 The set $\mathcal D$ contains independent (not necessarily identically distributed) observed trajectories generated using  $(\beta_i)_{i=1}^n$, such that for some (unknown) $\varepsilon > 0$,  
$(\beta(a|o)<\varepsilon)\implies(\pi(a|\tilde o) = 0)$, for all $s \in \mathcal S, o \in  \text{supp}(\Omega(s)), \tilde o \in \text{supp}(\Omega_2(s,o)), a \in \mathcal A,$ and $i \in \{1,\dotsc,n\}$.
\end{ass}

\begin{thm}
Under \thref{ass:support}, $\hat F_n$ is an unbiased and uniformly consistent estimator of $F_\pi$. That is,
\begin{align}
    \forall \nu\in \mathbb{R}, \quad \mathbb{E}_{\mathcal D}\Big[\hat F_n(\nu)\Big] &= F_\pi(\nu), &&
    \underset{\nu \in \mathbb R}{\sup} \quad \Big|\hat F_n(\nu) -  F_\pi(\nu) \Big| \overset{\text{a.s.}}{\longrightarrow} 0.
\end{align}
\end{thm}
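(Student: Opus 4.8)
The plan is to prove unbiasedness first and then derive uniform consistency from it, handling the generalized partial-observability setting in which $\widetilde O \neq O$. For unbiasedness, the key computation is to verify that the importance ratio $\rho(h) = \prod_{j=0}^T \pi(A_j|\widetilde O_j)/\beta(A_j|O_j)$ equals the ratio of trajectory densities $\Pr(H_\pi = h)/\Pr(H_\beta = h)$. I would write out the full factorization of $\Pr(H_\pi = h)$ as a product over timesteps of the terms $d_0(s_0)\,\Omega(o_0|s_0)\,\Omega_2(\tilde o_0|s_0,o_0)\,\pi(a_0|\tilde o_0)\,\mathcal R(r_0|s_0,a_0)\,\mathcal P(s_1|s_0,a_0)\cdots$, and the analogous factorization for $\Pr(H_\beta = h)$ with $\beta(a_j|o_j)$ replacing $\pi(a_j|\tilde o_j)$. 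Everything except the action-selection terms is shared between the two policies (the environment dynamics $d_0, \Omega, \Omega_2, \mathcal R, \mathcal P$ do not depend on the policy), so the ratio telescopes down to exactly $\rho(h)$; this is where \thref{ass:support} is used, to ensure the denominators are bounded below by $\varepsilon$ and the ratio is well-defined. Then, marginalizing over the unobserved components $(S_j, O_j)$ to pass from full trajectories $h$ to observed trajectories, $\mathbb{E}_{H_i \sim \beta_i}[\rho_i \mathds 1_{\{G_i \leq \nu\}}]$ becomes $\sum_{h} \Pr(H_{\beta_i} = h)\,\frac{\Pr(H_\pi = h)}{\Pr(H_{\beta_i} = h)}\,\mathds 1_{\{g(h)\leq \nu\}} = \sum_h \Pr(H_\pi = h)\mathds 1_{\{g(h)\leq\nu\}} = F_\pi(\nu)$, using \eqref{main:eqn:4}; crucially $\beta_i$ cancels, so the identity holds per-sample regardless of which behavior policy generated $H_i$, and averaging over $i$ preserves it. (In the continuous case the sums are replaced by integrals and one invokes a Radon–Nikodym derivative as the footnote indicates.)

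For uniform consistency, I would fix $\nu$ and apply the strong law of large numbers to the sequence $\rho_i \mathds 1_{\{G_i \leq \nu\}}$, $i = 1,\dots,n$. The summands are independent but not identically distributed (different $\beta_i$), so I would use a SLLN variant for independent non-identically-distributed variables — e.g.\ Kolmogorov's criterion — which requires a uniform bound or summable-variance condition; this is supplied because under \thref{ass:support} the ratios satisfy $0 \le \rho_i \le (1/\varepsilon)^{T+1}$ almost surely and each $G_i \in (G_{\min}, G_{\max})$, so $\rho_i\mathds 1_{\{G_i\le\nu\}}$ is uniformly bounded. Hence $\hat F_n(\nu) \to F_\pi(\nu)$ almost surely for each fixed $\nu$. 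To upgrade pointwise to uniform convergence I would invoke the standard Pólya-type / Glivenko–Cantelli argument: $F_\pi$ is a bounded monotone non-decreasing function (a genuine CDF), and the limiting behavior of a pointwise-convergent sequence toward a monotone limit can be made uniform by partitioning the range $[0,1]$ of $F_\pi$ into finitely many $\eta$-width slabs, choosing grid points $\nu_1 < \dots < \nu_m$ with $F_\pi(\nu_{k+1}) - F_\pi(\nu_k) \le \eta$, and controlling $\sup_\nu |\hat F_n(\nu) - F_\pi(\nu)|$ by $\max_k |\hat F_n(\nu_k) - F_\pi(\nu_k)| + \eta$ plus a term handling the non-monotonicity of $\hat F_n$. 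The one subtlety is that $\hat F_n$ need \emph{not} be monotone (Remark \thref{rem:geq1}), so the usual two-sided sandwiching via monotonicity of the empirical CDF has to be replaced: one instead controls $|\hat F_n(\nu) - \hat F_n(\nu_k)|$ directly as the importance-weighted mass $\frac1n\sum_i \rho_i \mathds 1_{\{\nu_k < G_i \le \nu\}}$, whose expectation is $F_\pi(\nu) - F_\pi(\nu_k) \le \eta$ and which again converges by the same bounded-SLLN argument. Taking a finite grid, a union bound over the $m$ grid points, then $\eta \to 0$, gives the uniform a.s.\ statement.

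The main obstacle I anticipate is not any single estimate but getting the measure-theoretic bookkeeping in the $\widetilde O \neq O$ partial-observability setting exactly right: one must be careful that the importance ratio uses $\pi(a_j|\tilde o_j)$ in the numerator but $\beta(a_j|o_j)$ in the denominator with \emph{different} observation variables, and that after the telescoping cancellation of all policy-independent factors, the remaining $\Omega_2$ and $\Omega$ terms are handled correctly — the cancellation works precisely because $\Omega_2$ and $\Omega$ are themselves policy-independent, so they appear identically in both $\Pr(H_\pi=h)$ and $\Pr(H_\beta=h)$. Making this rigorous requires writing the trajectory likelihood as a product of conditional kernels in a fixed causal order and then identifying which factors cancel; once that is set up, unbiasedness is essentially immediate and consistency reduces to the bounded-SLLN-plus-grid argument sketched above.
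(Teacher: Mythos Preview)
Your approach is essentially the same as the paper's: unbiasedness via the trajectory-density factorization and cancellation of all policy-independent factors (including $\Omega$ and $\Omega_2$), pointwise almost-sure convergence via Kolmogorov's SLLN for independent bounded non-identically-distributed summands, and then a Glivenko--Cantelli grid argument to upgrade to uniform convergence.

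One correction: $\hat F_n$ \emph{is} monotone non-decreasing in $\nu$, because each $\rho_i \geq 0$ and $\mathds 1_{\{G_i \leq \nu\}}$ is non-decreasing; Remark~\ref{rem:geq1} says only that $\hat F_n$ can exceed $1$, not that it fails monotonicity. The paper exploits this monotonicity directly in the sandwiching step (so your workaround via the interval mass is unnecessary), and handles possible jumps in $F_\pi$ by introducing the left-limit quantities $F_\pi(\kappa_i^-)=\Pr(G_\pi<\kappa_i)$ and $\hat F_n(\kappa_i^-)=\frac{1}{n}\sum_i \rho_i \mathds 1_{\{G_i<\kappa_i\}}$ at the grid points---a detail your sketch omits but which is needed to guarantee $F_\pi(\kappa_i^-)-F_\pi(\kappa_{i-1})<\eta$ when $F_\pi$ has atoms larger than $\eta$.
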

\begin{proof}
This theorem has two results: unbiasedness and consistency of $\hat F_n$.
Therefore, we break the proof into two parts. 

\paragraph{Part 1 (Unbiasedness). }
We begin by expanding $F_\pi$ for any $\nu \in \mathbb R$ using the definition of the CDF.
\begin{align}
    F_\pi(\nu) 
    &= \Pr(G_\pi \leq \nu)
    = \int_{-\infty}^{\nu} p(G_\pi = x) \mathrm{d}x = \int_{-\infty}^\nu \left( \int_{\mathscr H_\pi} p(H_\pi = h) \mathds{1}_{\{g(h) = x\}}\mathrm{d}h \right)\mathrm{d}x, \quad \label{eqn:1}
\end{align}
where we used the fact that the probability density of the return $G_\pi$ being $x$ is the integral of the probability densities of the trajectories $h$ whose return equals $x$.
Therefore, as the integrands in \eqref{eqn:1} are finite and non-negative measurable functions, using Tonelli's theorem for interchanging the integrals, \eqref{eqn:1} can be expressed as,
\begin{align}
    F_\pi(\nu) &=   \int_{\mathscr H_\pi} p(H_\pi = h) \left( \int_{-\infty}^{\nu}\mathds{1}_{\{g(h) = x\}} \mathrm{d}x \right) \mathrm{d}h 
    =   \int_{\mathscr H_\pi} p(H_\pi = h)\Big( \mathds{1}_{\{g(h) {\color{red}{\leq}} \nu\}}\Big) \mathrm{d}h, \label{eqn:2} 
\end{align}
where the last term follows because the output of $g(h)$ is a deterministic scalar given $h$ and thus the indicator function can be one for at most a single value less than $\nu$, and where the red color is used to highlight changes. 
Next, using  \thref{ass:support} to change the support of the distribution in \eqref{eqn:2} and using importance weights we obtain, 
%
\begin{align}
   F_\pi(\nu) &=   \int_{\color{red}{\mathscr H_\beta}} p(H_\pi = h)  \Big(\mathds{1}_{\{g(h) \leq \nu\}}\Big)\mathrm{d}h = 
 \int_{\mathscr H_\beta} p(H_\beta = h) \frac{ p(H_\pi = h)}{p(H_\beta = h)} \Big(\mathds{1}_{\{g(h) \leq \nu\}} \Big)\mathrm{d}h.\;\; \label{eqn:4}
\end{align}
To simplify \eqref{eqn:4}, we recursively use the fact that $p(X, Y) = p(X)p(Y|X)$ and note that under a given policy $\pi$ the probability density of a trajectory with partial observations and non-Markovian structure is
\begin{align}
    p(H_\pi = h) =& p(s_0)p(o_0|s_0) p(\tilde o_0 | o_0, s_0) p(a_0 | s_0, o_0, \tilde o_0; \pi)
    \\
    & \times \prod_{i=0}^{T-1} \Bigg(p(r_i | h_i) p(s_{i+1}|h_i) p(o_{i+1} | s_{i+1}, h_i) p(\tilde o_{i+1} | s_{i+1}, o_{i+1}, h_i)
    \\
    &\quad\quad \times  p(a_{i+1} |s_{i+1}, o_{i+1}, \tilde o_{i+1}, h_i; \pi) \Bigg)p(r_T | h_T), \label{eqn:8}
\end{align}
where conditioning on $\pi$ emphasizes that each action is sampled using $\pi$, and $h_i$ represents the trajectory of all the states, partial observations, and actions up to time step $i$.
Therefore, using \eqref{eqn:8}, the ratio between $p(H_\pi = h)$ and $p(H_\beta = h)$ can be written as,
\begin{align}
    \frac{ p(H_\pi = h)}{p(H_\beta = h)} &= \frac{p(a_0 | s_0, o_0, \tilde o_0; \pi)}{p(a_0 | s_0, o_0, \tilde o_0; \beta)}
    \prod_{i=0}^{T-1} \frac{p(a_{i+1} |s_{i+1}, o_{i+1}, \tilde o_{i+1}, h_i; \pi)}{p(a_{i+1} |s_{i+1}, o_{i+1}, \tilde o_{i+1}, h_i; \beta)} \\
    &= 
    \prod_{i=0}^T \frac{\pi(a_i|\widetilde o_i)}{\beta(a_i| o_i)}
    \\
    &= \rho(h). \label{eqn:5}
\end{align}

Combining \eqref{eqn:4} and \eqref{eqn:5},
\begin{align}
    F_\pi(\nu) &=  \int_{\mathscr H_\beta} p(H_\beta = h)\rho(h) \Big(\mathds{1}_{\{g(h) \leq \nu\}} \Big) \mathrm{d}h. \label{eqn:6} 
\end{align}
Finally, it can be shown that our proposed estimator $\hat F_n$ is an unbiased estimator of $F_\pi$ by taking the expected value of $\hat F_n$,
\begin{align}
    \mathbb{E}_{\mathcal D} \Big[\hat F_n (\nu) \Big] &= \mathbb{E}_{\mathcal D} \left[ \frac{1}{n}  \sum_{i=1}^n \rho_i\Big( \mathds{1}_{\{G_i \leq \nu\}}\Big) \right]
    \\ 
    &= \frac{1}{n}  \sum_{i=1}^n  \mathbb{E}_{\mathcal D} \left[ \rho_i\Big( \mathds{1}_{\{G_i \leq \nu\}}\Big) \right] 
    \\
    &= \frac{1}{n}  \sum_{i=1}^n \int_{\mathscr{H}_{\beta_i}} p(H_{\beta_i} = h)\rho(h) \Big( \mathds{1}_{\{g(h) \leq \nu\}}\Big) \mathrm{d}h
    \\
    &\overset{(a)}{=} \frac{1}{n} \sum_{i=1}^n F_\pi(\nu)
    \\
    &= F_\pi(\nu), \label{eqn:9}
\end{align}
where (a) follows from \eqref{eqn:6}, which holds for any behavior policy $\beta$ that satisfies \thref{ass:support}.

\textbf{Note:}
%
$H_\pi$ or $H_\beta$ were invoked only for the  purposes of the proof. 
%
Notice that the proposed estimator, $\hat F_n(\nu) = \frac{1}{n}\sum_{i=1}^n \rho_i\big( \mathds{1}_{\{G_i \leq \nu\}}\big)$, only depends on the quantities available in the observed trajectory $(H_i)_{i=1}^n$ from $\mathcal D$. 

\paragraph{Part 2 (Uniform Consistency). } 
For this part, we will first show pointwise consistency, i.e., for any $\nu$, $\hat F_n(\nu) \overset{\text{a.s.}}{\longrightarrow} F_\pi(\nu)$, and then we will use this to establish \textit{uniform} consistency, as required.
To do so, let 
\begin{align}
    X_i \coloneqq  \rho_i\Big( \mathds{1}_{\{G_i \leq \nu\}}\Big). \label{eqn:7}
\end{align}
From \thref{ass:support}, we know that trajectories are independent and that $\beta(a|o) \geq \varepsilon$ when $\pi(a|\tilde o) > 0$.
This implies that the denominator in the IS ratio is bounded below when $\pi(a|\tilde o) \neq 0$, and hence the $X_i$'s are bounded above and have a finite variance. 
Further, as established in \eqref{eqn:9}, the expected value of $X_i$ for all $i$ equals $F_\pi(\nu)$.
Therefore, using Kolmogorov's strong law of large numbers \citep[Theorem  2.3.10 with Proposition 2.3.10]{Sen1993}, 
\begin{align}
    \hat F_n(\nu) = \frac{1}{n} \sum_{i=1}^n X_i \overset{\text{a.s.}}{\longrightarrow} \mathbb{E}_{\mathcal D} \left[ \frac{1}{n} \sum_{i=1}^n X_i\right] = F_\pi(\nu). \label{eqn:10}
\end{align}
In the following, to obtain uniform consistency, we follow the proof for the Glivenko-Cantelli theorem \citep{glivenko1933sulla, cantelli1933sulla,gclemmanote1,gclemmanote2} using the pointwise consistency of the off-policy CDF estimator $\hat F_n$ established in \eqref{eqn:10}.
The proof relies upon the construction of $K$ key points such that the difference in $F_\pi$ at successive key points is bounded by a small $\epsilon_1$. 
However, this would not be possible directly as there can be discontinuties/jumps in $F_\pi$ that are greater than $\epsilon_1$.
To tackle such discontinuties, we introduce some extra notation,
%
%
Formally, let, $\forall \nu \in \mathbb R$,
\begin{align}
    F_\pi(\nu^-) \coloneqq \Pr(G_\pi {\color{red}<} \nu) = F_\pi(\nu) - \Pr(G_\pi=\nu), && \hat F_n(\nu^-) \coloneqq \frac{1}{n} \sum_{i=1}^n \rho_i \Big(\mathds{1}_{\{G_i {\color{red} < }\nu\}} \Big). \label{eqn:20}
\end{align}
Then, using arguments analogous to the ones used for \eqref{eqn:10}, it can be observed that 
\begin{align}
    \hat F_n(\nu^-) \overset{\text{a.s}}{\longrightarrow} F_\pi(\nu^-). \label{eqn:12}
\end{align}
Let $\epsilon_1 > 0$, and let $K$ be any value more than $1/\epsilon_1$.
Let $(\kappa_i)_{i=0}^K$ be $K$ key points, 
\begin{align}
    G_{\min} = \kappa_0 < \kappa_1 \leq \kappa_2  ....  \leq \kappa_{K-1} < \kappa_K = G_{\max},
\end{align}
which create $K$ intervals such that for all $i \in (1, ..., K-1)$,
\begin{align}
    F_\pi(\kappa_i^-) \leq \frac{i}{K} \leq F_\pi(\kappa_i).
\end{align}
Then by construction, if $\kappa_{i-1} < \kappa_i$,
\begin{align}
    F_\pi(\kappa_i^-) - F_\pi(\kappa_{i-1}) \leq \frac{i}{K} - \frac{i-1}{K} = \frac{1}{K} < \epsilon_1. \label{eqn:11}
\end{align}
Intuitively, as $F_\pi$ is monotonically non-decreasing, \eqref{eqn:11} restricts the intermediate values for any $F_\pi(\nu)$, to be within an $\epsilon_1$ distance of the CDF values at its nearby key points.
Notice the role of $\kappa_i^-$ here: it would not have been possible to bound difference between $F_\pi(\kappa_i)$ and $F_\pi(\kappa_{i-1})$ by $\epsilon_1$ as there could have been `jumps' of value greater than $\epsilon_1$ in $F_\pi$.
However, $\kappa^-$ and $\kappa$ can be used to consider key points right before and after any jump in $F_\pi$, which ensures that we can always construct sequence of key points  such that $F_\pi(\kappa_{i}^-) -  F_\pi(\kappa_{i-1})$ is instead bounded by $\epsilon_1$.

For the CDF estimates at the key points, let,
\begin{align}
    \Delta_n \coloneqq \max_{i \in (1...K-1)} \Big\{\left|\hat F_n(\kappa_i) - F_\pi(\kappa_i) \right|, \left|\hat F_n(\kappa_i^-) - F_\pi(\kappa_i^-) \right|  \Big\}. \label{eqn:13}
\end{align}
From \eqref{eqn:10} and \eqref{eqn:12}, as $\hat F_n(\nu)$ and $\hat F_n(\nu^-)$ are consistent estimators of $F_\pi(\nu)$ and $F_\pi(\nu^-)$, respectively, and since the maximum is over a finite set in \eqref{eqn:13}, it follows that as $n \rightarrow \infty$,
\begin{align}
    \Delta_n \overset{\text{a.s.}}{\longrightarrow} 0. \label{eqn:18}
\end{align}
For any $\nu$, let $\kappa_{i-1}$ and $\kappa_i$ be such that $\kappa_{i-1} \leq \nu < \kappa_i$.
Then,
\begin{align}
    \hat F_n(\nu) - F_\pi(\nu) &\leq \hat F_n(\kappa_i^-) - F_\pi(\kappa_{i-1}) 
    \\
    &\leq \hat F_n(\kappa_i^-) - F_\pi(\kappa_{i}^-) + \epsilon_1, \label{eqn:14}
\end{align}
where the last step follows using \eqref{eqn:11}. Similarly, 
\begin{align}
    \hat F_n(\nu) - F_\pi(\nu) &\geq \hat F_n(\kappa_{i-1}) - F_\pi(\kappa_{i}^-) 
    \\
    &\geq \hat F_n(\kappa_{i-1}) - F_\pi(\kappa_{i-1}) - \epsilon_1. \label{eqn:15}
\end{align}
Then, using \eqref{eqn:14} and \eqref{eqn:15}, $\forall \nu \in \mathbb R$,
\begin{align}
   \hat F_n(\kappa_{i-1}) - F_\pi(\kappa_{i-1}) - \epsilon_1 \leq  \hat F_n(\nu) - F_\pi(\nu) \leq \hat F_n(\kappa_i^-) - F_\pi(\kappa_{i}^-) + \epsilon_1, \label{eqn:16}
\end{align}
and thus using \eqref{eqn:13} and \eqref{eqn:16},
\begin{align}
    \Big| \hat F_n(\nu) - F_\pi(\nu)  \Big| \leq \Delta_n + \epsilon_1. \label{eqn:17}
\end{align}
Using \eqref{eqn:18}, we obtain the following property of the upper bound in \eqref{eqn:17}: 
\begin{align}
    \quad\quad \Delta_n + \epsilon_1 \overset{\text{a.s}}{\longrightarrow} \epsilon_1. \label{eqn:19}
\end{align}
Finally, since \eqref{eqn:17} holds for $\forall \nu \in \mathbb R$ and \eqref{eqn:19} is valid for any $\epsilon_1 > 0$, making $\epsilon_1 \rightarrow 0$ gives the desired result,
\begin{align}
    \underset{\nu \in \mathbb R}{\sup} \quad \Big|\hat F_n(\nu) -  F_\pi(\nu) \Big| &\overset{\text{a.s.}}{\longrightarrow} 0. \label{eqn:21}
\end{align}
\end{proof}

\paragraph{Variance-reduced estimation:}
\label{apx:sec:WIS}
It is known that importance-sampling-based estimators are subject to high variance, which can often be limiting in practice \citep{guo2017using}.
A popular approach to mitigate variance is to use \textit{weighted} importance sampling (WIS), which trades off variance for bias.
Leveraging this approach, we propose the following variance-reduced estimator, $\bar F_n$, of $F_\pi$,
\begin{align}
    \forall \nu \in \mathbb R, \quad \bar F_n(\nu) &\coloneqq \frac{1}{\sum_{j=1}^n \rho_j} \left(\sum_{i=1}^n \rho_i \Big(\mathds{1}_{\{G_i \leq \nu\}}\Big) \right).
    \label{eqn:WISF}
\end{align}
%
%
In the following theorem, we show that $\bar F_n$ is a biased estimator of $F_\pi$, though it preserves consistency.
%
%
\begin{prop}
Under \thref{ass:support}, $\bar F_n$ may be biased but is a uniformly consistent estimator of $F_\pi$, 
\begin{align}
    \forall \nu\in \mathbb{R}, \quad \mathbb{E}_{\mathcal D}\Big[\bar F_n(\nu)\Big] &\neq F_\pi, 
    &
    \underset{\nu \in \mathbb R}{\sup} \quad \Big|\bar F_n(\nu) -  F_\pi(\nu) \Big| \overset{\text{a.s.}}{\longrightarrow} 0.
\end{align}
\thlabel{thm:WISFbiased}
\end{prop}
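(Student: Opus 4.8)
The plan is to establish the two claims about $\bar F_n$ separately. For the bias claim, I would exhibit a simple counterexample. Take a trivial case with horizon $T=0$, a single state/observation, two actions, and deterministic rewards so that $\pi \neq \beta$ gives $\rho$ two distinct values. With $n=1$, $\bar F_1(\nu) = \rho_1 \mathds{1}_{\{G_1 \leq \nu\}} / \rho_1 = \mathds{1}_{\{G_1 \leq \nu\}}$, whose expectation is $\Pr(G_\beta \leq \nu) = F_\beta(\nu) \neq F_\pi(\nu)$ in general. (Even for $n \geq 2$, the self-normalizing ratio makes $\mathbb{E}[\bar F_n(\nu)]$ a ratio of correlated quantities, which generically does not equal $F_\pi(\nu)$; but a single clean example suffices to prove ``may be biased.'') So the bias part is essentially a one-line construction.

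For the consistency claim, I would mirror the two-part structure of the proof of \thref{thm:Funbiased}: first pointwise consistency, then promote it to uniform consistency via the same Glivenko--Cantelli-style argument. For pointwise consistency at a fixed $\nu$, write
\begin{align}
    \bar F_n(\nu) = \frac{\frac{1}{n}\sum_{i=1}^n \rho_i \mathds{1}_{\{G_i \leq \nu\}}}{\frac{1}{n}\sum_{j=1}^n \rho_j} = \frac{\hat F_n(\nu)}{\frac{1}{n}\sum_{j=1}^n \rho_j}.
\end{align}
The numerator converges almost surely to $F_\pi(\nu)$ by \thref{thm:Funbiased} (or rather by \eqref{eqn:10} in its proof). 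For the denominator, note $\mathbb{E}_{\mathcal D}[\rho_i] = 1$ for each $i$ (this is the normalization property of importance weights, which follows from \eqref{eqn:5} applied with the indicator replaced by $1$, equivalently taking $\nu \to \infty$ in \eqref{eqn:6} so $\int p(H_{\beta_i}=h)\rho(h)\,\mathrm{d}h = F_\pi(\infty) = 1$), and the $\rho_i$ are independent and bounded above with finite variance by \thref{ass:support}. Hence Kolmogorov's SLLN gives $\frac{1}{n}\sum_{j=1}^n \rho_j \overset{\text{a.s.}}{\longrightarrow} 1$. Since the limit of the denominator is $1 \neq 0$, the continuous mapping theorem (or just the algebra of almost-sure limits) yields $\bar F_n(\nu) \overset{\text{a.s.}}{\longrightarrow} F_\pi(\nu)$.

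For uniform consistency, I would repeat the key-point argument from Part 2 of the proof of \thref{thm:Funbiased} essentially verbatim, but it is cleaner to argue directly: for every $\nu$,
\begin{align}
    \Big| \bar F_n(\nu) - F_\pi(\nu) \Big| = \left| \frac{\hat F_n(\nu)}{W_n} - F_\pi(\nu) \right| \leq \frac{1}{W_n}\Big| \hat F_n(\nu) - F_\pi(\nu) \Big| + \left| \frac{1}{W_n} - 1 \right| F_\pi(\nu),
\end{align}
where $W_n \coloneqq \frac{1}{n}\sum_{j=1}^n \rho_j$. Taking the supremum over $\nu$, the first term is bounded by $\frac{1}{W_n}\sup_\nu |\hat F_n(\nu) - F_\pi(\nu)|$, which tends to $0$ almost surely by \thref{thm:Funbiased} combined with $W_n \to 1$; the second term is bounded by $|1/W_n - 1|$ since $F_\pi \leq 1$, which also tends to $0$ almost surely. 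Hence $\sup_{\nu \in \mathbb R} |\bar F_n(\nu) - F_\pi(\nu)| \overset{\text{a.s.}}{\longrightarrow} 0$.

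I do not expect a serious obstacle here; the proof is largely a reduction to \thref{thm:Funbiased} plus the SLLN for $W_n$. The one point needing a little care is justifying $\mathbb{E}[\rho_i] = 1$ (so that $W_n \to 1$ rather than to some other constant) --- this is exactly the statement that $F_\pi$ is a genuine CDF with $F_\pi(\infty)=1$, which is already implicit in \eqref{eqn:6}, so it is not really an obstacle so much as a step that should be stated explicitly. A secondary subtlety is that, with a finite dataset, $W_n$ could in principle be $0$ (if every $\rho_i = 0$), making $\bar F_n$ undefined; but this event has probability tending to $0$ and does not affect almost-sure convergence, so I would note it in passing and proceed.
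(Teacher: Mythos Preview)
Your proposal is correct. The bias counterexample (take $n=1$ so the importance weights cancel and $\bar F_1$ reduces to the on-policy empirical CDF under $\beta$) and the pointwise consistency argument (write $\bar F_n(\nu)$ as $\hat F_n(\nu)/W_n$, apply Kolmogorov's SLLN to get $W_n \overset{\text{a.s.}}{\to} 1$, then continuous mapping) match the paper's proof essentially line for line.

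Where you diverge is in the passage from pointwise to uniform consistency. The paper simply points back to the Glivenko--Cantelli key-point construction from the proof of \thref{thm:Funbiased} and says the same argument goes through with $\bar F_n$ in place of $\hat F_n$. You instead give the direct decomposition
\[
\sup_{\nu}\big|\bar F_n(\nu) - F_\pi(\nu)\big| \;\leq\; \frac{1}{W_n}\,\sup_{\nu}\big|\hat F_n(\nu) - F_\pi(\nu)\big| \;+\; \Big|\tfrac{1}{W_n}-1\Big|,
\]
and invoke the already-established uniform consistency of $\hat F_n$. This is a genuinely cleaner route: it treats \thref{thm:Funbiased} as a black box and avoids redoing the $\epsilon_1$-partition into key points, at the cost of nothing (you need $W_n \to 1$ either way). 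The paper's approach, by contrast, would work even without having first proved uniform consistency of $\hat F_n$, since it only needs pointwise consistency of $\bar F_n$ at finitely many key points; but since \thref{thm:Funbiased} is already in hand, your reduction is the more economical choice.
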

\begin{proof} 
Similar to the proof for \thref{thm:Funbiased}, we break this proof in two parts, one to establish bias and the other to establish consistency of $\hat F_n$.

\paragraph{Part 1 (Biased):} We prove this using a counter-example. Let $n=1$ and $\pi \neq \beta_1$, so
\begin{align}
    \forall \nu\in \mathbb{R}, \quad \mathbb{E}_{\mathcal D}\Big[\bar F_n(\nu)\Big] &= \mathbb{E}_{\mathcal D}\left[ \frac{1}{\sum_{j=1}^1 \rho_j} \left(\sum_{i=1}^1 \rho_i \mathds{1}_{\{G_i \leq \nu\}} \right)\right] 
    \\
    &= \mathbb{E}_{\mathcal D}\left[  \mathds{1}_{\{G_1 \leq \nu\}}\right]
    \\
    &\overset{(a)}{=}  \int_{\mathscr H_{\beta_1}} p(H_{\beta_1} = h)\Big( \mathds{1}_{\{g(h) \leq \nu\}}\Big) \mathrm{d}h
    \\
    &= F_{\beta_1}(\nu)
    \\
    &\neq F_\pi(\nu), 
\end{align}
where (a) follows analogously to \eqref{eqn:2}.

\paragraph{Part 2 (Uniform Consistency): }
First, we will establish pointwise consistency, i.e., for any $\nu$, $\bar F_n(\nu) \overset{\text{a.s.}}{\longrightarrow} F_\pi(\nu)$, and then we will use this to establish \textit{uniform} consistency, as required.
\begin{align}
    \forall \nu \in \mathbb R, \quad \bar F_n(\nu) &=  \frac{1}{\sum_{j=1}^1 \rho_j} \left(\sum_{i=1}^1 \rho_i \mathds{1}_{\{G_i \leq \nu\}} \right)
    \\
    &=  \left(\frac{1}{n}\sum_{j=1}^n \rho_j\right)^{-1} \left(\frac{1}{n}\sum_{i=1}^n \rho_i \mathds{1}_{\{G_i \leq \nu\}} \right).
\end{align}
Let $X_n \coloneqq  \frac{1}{n}\sum_{j=1}^n \rho_j$ and $Y_n \coloneqq \frac{1}{n}\sum_{i=1}^n \rho_i \mathds{1}_{\{G_i \leq \nu\}}$.
Now, as $\bar F_n(\nu)$ is a continuous function of both $X_n$ and $Y_n$, 
if both $( \lim\limits_{n\rightarrow \infty} \, X_n)^{-1}$ and $(\lim\limits_{n \rightarrow \infty} \, Y_n)$ exist then using the continuous mapping theorem \citep[Theorem  2.3]{van2000asymptotic}, 
\begin{align}
    \forall \nu \in \mathbb R, \quad \lim_{n \rightarrow \infty} \quad \bar F_n(\nu) &=   \left(\lim_{n \rightarrow \infty} \, X_n\right)^{-1} \left(\lim_{n \rightarrow \infty} \, Y_n \right) \label{eqn:23}.
\end{align}
Notice using Kolmogorov's strong law of large numbers \citep[Theorem  2.3.10 with Proposition 2.3.10]{Sen1993} that the term in the first parentheses will almost surely converge to the expected value of importance ratios, which equals one \citep{precup2000eligibility}.
Similarly, we know from \eqref{eqn:10} that the term in the second parentheses will converge to $F_\pi(\nu)$ almost surely. Therefore, both parenthetical terms of \eqref{eqn:23} exist, and thus 
\begin{align}
    \forall \nu \in \mathbb R, \quad \bar F_n(\nu) \overset{\text{a.s.}}{\longrightarrow} (1)^{-1} (F_\pi(\nu)) = F_\pi(\nu) . \label{eqn:22}
\end{align}

Now, similar to the proof for \thref{thm:Funbiased}, combining \eqref{eqn:22}  with arguments from \eqref{eqn:20} to \eqref{eqn:21}, it can be observed that 

\begin{align}
    \underset{\nu \in \mathbb R}{\sup} \quad \Big|\bar F_n(\nu) -  F_\pi(\nu) \Big| &\overset{\text{a.s.}}{\longrightarrow} 0.
\end{align}

\end{proof}

\begin{thm}
Under \thref{ass:support}, for any $\delta \in (0, 1]$, if $\sum_{i=1}^K\delta_i \leq \delta$, then the confidence band defined by $F_-$ and $F_+$ provides guaranteed coverage for $F_\pi$.
That is, 
\begin{align}
    \Pr \Big(\forall \nu, \,\, F_{-}(\nu) \leq F_\pi(\nu) \leq F_+(\nu) \Big) \geq 1 - \delta.
\end{align}
\end{thm}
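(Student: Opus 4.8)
The plan is to show that the event $\{\forall \nu \in \mathbb{R},\ F_-(\nu) \leq F_\pi(\nu) \leq F_+(\nu)\}$ follows from the intersection of the $K$ pointwise events $E_i \coloneqq \{\texttt{CI}_-(\kappa_i,\delta_i) \leq F_\pi(\kappa_i) \leq \texttt{CI}_+(\kappa_i,\delta_i)\}$, and then to bound $\Pr(\bigcap_{i=1}^K E_i)$ from below by a union bound. First I would fix an arbitrary outcome on which every $E_i$ holds simultaneously, and argue deterministically on that outcome that the full band containment holds at \emph{every} point $\nu$, not just the key points. This is where monotonicity of $F_\pi$ does the work: for the lower function, if $\kappa_i \leq \nu$ then $F_\pi(\nu) \geq F_\pi(\kappa_i) \geq \texttt{CI}_-(\kappa_i,\delta_i)$, so $F_\pi(\nu)$ is at least the maximum of $\texttt{CI}_-(\kappa_i,\delta_i)$ over all $\kappa_i \leq \nu$, which is exactly $F_-(\nu)$ in the ``otherwise'' case; for $\nu > G_{\max}$ we have $F_\pi(\nu) = 1 = F_-(\nu)$ (using the convention $\texttt{CI}_+(\kappa_{K+1},\delta_{K+1}) = 1$ and $\kappa_{K+1} = G_{\max}$, together with the fact that returns are bounded above by $G_{\max}$). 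The symmetric argument with $F_\pi(\nu) \leq F_\pi(\kappa_i) \leq \texttt{CI}_+(\kappa_i,\delta_i)$ for $\kappa_i \geq \nu$, plus the $\nu < G_{\min}$ boundary case, gives $F_\pi(\nu) \leq F_+(\nu)$.

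Having established the deterministic implication $\bigcap_{i=1}^K E_i \subseteq \{\forall \nu,\ F_-(\nu) \leq F_\pi(\nu) \leq F_+(\nu)\}$, I would conclude by complementation and the union bound:
\begin{align}
\Pr\Big(\exists \nu,\ F_\pi(\nu) \notin [F_-(\nu), F_+(\nu)]\Big) \leq \Pr\Big(\bigcup_{i=1}^K E_i^c\Big) \leq \sum_{i=1}^K \Pr(E_i^c) \leq \sum_{i=1}^K \delta_i \leq \delta,
\end{align}
where the bound $\Pr(E_i^c) \leq \delta_i$ is exactly the validity guarantee of the confidence intervals $\texttt{CI}_\pm$ stated just before the theorem, and the last inequality is the hypothesis $\sum_{i=1}^K \delta_i \leq \delta$. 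Taking complements gives the claimed $\Pr(\forall \nu,\ F_-(\nu) \leq F_\pi(\nu) \leq F_+(\nu)) \geq 1-\delta$. Note the confidence intervals are valid under \thref{ass:support}, which is the only place that assumption enters; in particular it is what makes $\hat F_n(\kappa_i)$ an unbiased estimator of $F_\pi(\kappa_i)$ so that a mean-estimation CI applies to $X = \rho\,\mathds{1}_{\{G \leq \kappa_i\}}$.

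The main obstacle is not probabilistic at all — the union bound step is routine — but rather the careful bookkeeping of the boundary cases and the piecewise definition of $F_-$ and $F_+$ in \eqref{eqn:Fband2}. Specifically, one must check that the ``$\max$ over $\kappa_i \leq \nu$'' is well-defined for every $\nu \geq G_{\min}$ (it is, since $\kappa_0 = G_{\min}$ with $\texttt{CI}_-(\kappa_0,\delta_0) = 0$ is always in the set), that the clipping at $1$ and $0$ is consistent with $F_\pi$ being a genuine CDF supported on $[G_{\min}, G_{\max}]$, and that the indices $i$ range over $0,\dots,K+1$ with the stipulated conventions so that $F_-$ and $F_+$ are defined on all of $\mathbb{R}$. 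Once those conventions are pinned down, the monotonicity argument is a one-line observation at each $\nu$, and the proof is complete.
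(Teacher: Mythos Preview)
Your proposal is correct and follows essentially the same approach as the paper: a union bound over the $K$ pointwise confidence events, combined with monotonicity of $F_\pi$ to extend coverage from the key points to all of $\mathbb{R}$. The only cosmetic difference is ordering---you first establish the deterministic implication $\bigcap_i E_i \subseteq \{\forall\nu,\,F_-(\nu)\le F_\pi(\nu)\le F_+(\nu)\}$ and then apply the union bound, whereas the paper first bounds $\Pr(\bigcap_i A_i)$ and then invokes monotonicity; your handling of the boundary conventions at $\kappa_0,\kappa_{K+1}$ is also a bit more explicit than the paper's.
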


\begin{proof}
 Let $A_i$ be the event that for the key point $\kappa_i$, $\texttt{CI}_-(\kappa_i, \delta_i) \leq F_\pi(\kappa_i) \leq \texttt{CI}_+(\kappa_i, \delta_i)$, for all $i \in (1,..., K)$. Let superscript $c$ denote a complementary event;  then by the union bound, the total probability of the bounds holding at each key point simultaneously is 
 \begin{align}
     \Pr\Big(\cap_{i=1}^K A_i\Big) &= 1 - \Pr\Big((\cap_{i=1}^K A_i)^c \Big)
     = 1 - \Pr\Big(\cup_{i=1}^K A_i^c \Big)
     \geq 1 - \sum_{i=1}^K\Pr\Big( A_i^c \Big) 
     \overset{(a)}{\geq} 1 - \delta,\;\;\;\; \label{eqn:24}
 \end{align}
 where $(a)$ holds because the  conditions of the theorem assert that the sum of probabilities of the bounds failing at each key point is at most $\delta$.
 Therefore, using \eqref{eqn:24},
 \begin{align}
     \Pr\left(\forall i \in (1,...,K), \,\, \texttt{CI}_-(\kappa_i, \delta_i) \leq F_\pi(\kappa_i) \leq \texttt{CI}_+(\kappa_i, \delta_i)  \right) \geq 1 - \delta. \label{eqn:25}
 \end{align}
 Since by construction, at the key points $(\kappa_i)_{i=1}^K, F_-(\kappa_i) = \texttt{CI}_-(\kappa_i, \delta_i)$ and $F_+(\kappa_i) = \texttt{CI}_+(\kappa_i, \delta_i)$, it follows from \eqref{eqn:25} that
 \begin{align}
     \Pr\left(\forall i \in (1,...,K), \,\, F_-(\kappa_i) \leq F_\pi(\kappa_i) \leq F_+(\kappa_i)  \right) \geq 1 - \delta. \label{eqn:26}
 \end{align}
 Using the monotonically non-decreasing property of a CDF, at any point $\nu \in \mathbb R$ such that $\kappa_i \leq \nu \leq \kappa_{i+1}$, we know that $ F_\pi(\kappa_{i}) \leq F_\pi(\nu) \leq F_\pi(\kappa_{i+1})$. Therefore, when the bounds at the key points hold, $F_\pi$ at the key points can also be upper and lower bounded:  $ F_-(\kappa_{i}) \leq F_\pi(\nu) \leq F_+(\kappa_{i+1})$. Therefore, by \eqref{eqn:26} and the construct in \eqref{eqn:lub}, it immediately follows that 
 
\begin{align}
    \Pr \Big(\forall \nu, \,\, F_{-}(\nu) \leq F_\pi(\nu) \leq F_+(\nu) \Big) \geq 1 - \delta.
\end{align}
\end{proof}

\begin{thm}
Under \thref{ass:support}, for any $1-\delta$ confidence band $\mathcal F$, the confidence interval defined by $\psi_-$ and $\psi_+$ provides guaranteed coverage for $\psi(F_\pi)$.
That is,
\begin{align}
\Pr \Big(\psi_- \leq \psi(F_\pi) \leq \psi_+ \Big) \geq 1 - \delta. 
\end{align}
\end{thm}

\begin{proof} Recall that the confidence band $\mathcal F$ is a random variable dependent on the data $\mathcal D$.
Let $\mathbb{E}_{\mathcal F}[\cdot]$ represent expectation with respect to $\mathcal F$, then repeatedly using the law of total probability, 
    \begin{align}
        \Pr\Big(\psi_- \leq \psi(F_\pi) \leq \psi_+ \Big) 
        &= \mathbb{E}_{\mathcal F}\left[\Pr\Big(\psi_- \leq \psi(F_\pi) \leq \psi_+ \Big | \mathcal F \Big) \right] \\ 
        &= \mathbb{E}_{\mathcal F}\Big[\Pr\Big(\psi_- \leq \psi(F_\pi) \leq \psi_+ \Big| F_\pi \in \mathcal F, \mathcal F\Big) \Pr \Big(F_\pi \in \mathcal F \Big|  \mathcal F\Big) \\ &\quad\quad\quad + \Pr\Big(\psi_- \leq \psi(F_\pi) \leq \psi_+ \Big| F_\pi \not \in \mathcal F, \mathcal F\Big) \Pr\Big(F_\pi \not \in \mathcal F \Big | \mathcal F\Big) \Big] 
        \\
        &\geq \mathbb{E}_{\mathcal F}\left[\Pr\Big(\psi_- \leq \psi(F_\pi) \leq \psi_+ \Big| F_\pi \in \mathcal F, \mathcal F \Big) \Pr\Big(F_\pi \in \mathcal F\Big |  \mathcal F\Big) \right] 
        \\
        &\overset{(a)}{=} \mathbb{E}_{\mathcal F}\left[\Pr\Big(F_\pi \in \mathcal F \Big | \mathcal F\Big) \right]
        \\
        &= \Pr\Big(F_\pi \in \mathcal F \Big)
        \\
        &\overset{(b)}{\geq} 1 - \delta,
    \end{align}
    where $(a)$ follows from that fact that $F_\pi \in \mathcal F$ implies $\psi_- \leq \psi(F_\pi) \leq \psi_+ $. Step $(b)$ follows from \thref{thm:Fguarantee1}.
\end{proof}

\begin{proof}[Proof (Alternate)] This proof is shorter but requires a theoretical construct of a \textit{set of sets of functions}. 
That is, let $\mathbb F$ be any set of cumulative distribution functions and $\mathscr F$ be a set of such sets, such that
\begin{align}
    \mathscr F \coloneqq \Big\{ \mathbb F \,\, \Big| F_\pi \in \mathbb F \Big\}.
\end{align}
In other words, $\mathbb F$ is the set of CDFs which contains the true CDF $F_\pi$, and $\mathscr F$ is the set of \textit{all} such sets $\mathbb F$. From \thref{thm:Fguarantee1}, we know that the confidence band $\mathcal F$ contains $F_\pi$ with probability at least $1-\delta$. Therefore, it also holds that 
\begin{align}
    \Pr(\mathcal F \in \mathscr F) \geq 1-\delta.
\end{align}
However, the event $(\mathcal F \in \mathscr F)$ implies that $\psi_- \leq \psi(F_\pi) \leq \psi_+$ as $F_\pi$ is contained in this specific  $\mathcal F$ used to construct $\psi_-$ and $\psi_+$.
Therefore, it also holds that 
\begin{align}
    \Pr(\psi_- \leq \psi(F_\pi) \leq \psi_+) \geq 1 - \delta.
\end{align}
    
\end{proof}

\begin{thm}
Under \thref{ass:support,ass:shift}, for any $\delta \in (0, 1]$, the confidence band defined by $F^{(2)}_-$ and $F^{(2)}_+$ provides guaranteed coverage for $F^{(2)}_\pi$.
That is, 
\begin{align}
    \Pr \Big(\forall \nu, \,\, F^{(2)}_{-}(\nu) \leq F^{(2)}_\pi(\nu) \leq F^{(2)}_+(\nu) \Big) \geq 1 - \delta.
\end{align}
\end{thm}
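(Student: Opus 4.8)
The plan is to argue that the high-probability event on which the \emph{first-domain} band contains $F^{(1)}_\pi$ everywhere --- an event that already has probability at least $1-\delta$ by \thref{thm:Fguarantee1} --- is a subset of the event on which the \emph{second-domain} band contains $F^{(2)}_\pi$ everywhere. Once this set inclusion is in hand, monotonicity of probability gives the claim immediately, so the whole proof reduces to a pointwise deterministic comparison.

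First I would invoke \thref{thm:Fguarantee1}: since $\mathcal D$ was collected in the first domain under behavior policies satisfying \thref{ass:support}, the functions $F^{(1)}_-$ and $F^{(1)}_+$ constructed as in Section~\ref{sec:Fbounds} satisfy $\Pr\big(\forall \nu,\ F^{(1)}_-(\nu) \le F^{(1)}_\pi(\nu) \le F^{(1)}_+(\nu)\big) \ge 1-\delta$; call this event $E$. Next, working deterministically on $E$, I would use \thref{ass:shift}, which gives $|F^{(1)}_\pi(\nu) - F^{(2)}_\pi(\nu)| \le \epsilon$ for every $\nu$, hence $F^{(1)}_\pi(\nu) - \epsilon \le F^{(2)}_\pi(\nu) \le F^{(1)}_\pi(\nu) + \epsilon$. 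Chaining this with the bounds guaranteed by $E$ yields $F^{(1)}_-(\nu) - \epsilon \le F^{(2)}_\pi(\nu) \le F^{(1)}_+(\nu) + \epsilon$ for all $\nu$.

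The last step is to incorporate the clipping in \eqref{eqn:Fshiftbound}. Since $F^{(2)}_\pi$ is a CDF we also trivially have $0 \le F^{(2)}_\pi(\nu) \le 1$ for all $\nu$, so I can take the tighter of the two available lower bounds and the tighter of the two available upper bounds: $\max\!\big(0,\,F^{(1)}_-(\nu)-\epsilon\big) \le F^{(2)}_\pi(\nu) \le \min\!\big(1,\,F^{(1)}_+(\nu)+\epsilon\big)$, i.e.\ $F^{(2)}_-(\nu) \le F^{(2)}_\pi(\nu) \le F^{(2)}_+(\nu)$ for every $\nu$. Therefore $E$ implies the event $\{\forall\nu,\ F^{(2)}_-(\nu)\le F^{(2)}_\pi(\nu)\le F^{(2)}_+(\nu)\}$, and consequently $\Pr\big(\forall\nu,\ F^{(2)}_-(\nu)\le F^{(2)}_\pi(\nu)\le F^{(2)}_+(\nu)\big) \ge \Pr(E) \ge 1-\delta$.

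There is no genuinely hard step here; the only point that needs care is checking that the clipping at $0$ and $1$ in \eqref{eqn:Fshiftbound} does not spoil the inclusion --- which is precisely why I fold in the trivial range bounds $0 \le F^{(2)}_\pi \le 1$ before taking the $\max$ and $\min$. Everything else is a one-line combination of the existing first-domain coverage guarantee with the Kolmogorov--Smirnov constraint of \thref{ass:shift}, and the argument is uniform in $\nu$ because both ingredients hold simultaneously for all $\nu$.
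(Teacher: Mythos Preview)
Your proposal is correct and follows essentially the same approach as the paper: invoke \thref{thm:Fguarantee1} for the first-domain band, combine with the deterministic $\epsilon$-closeness from \thref{ass:shift}, and then apply the definition in \eqref{eqn:Fshiftbound}. If anything, you are slightly more explicit than the paper in justifying why the $\max(0,\cdot)$ and $\min(1,\cdot)$ clipping preserves the inclusion, which the paper absorbs into the phrase ``by the construct in \eqref{eqn:Fshiftbound}.''
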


\begin{proof}
From \thref{ass:shift},  $\underset{\nu \in \mathbb R}{\sup} \left | F^{(1)}_\pi(\nu) - F^{(2)}_\pi(\nu) \right| \leq \epsilon$. Or equivalently, 
\begin{align}
    \forall \nu \in \mathbb{R}, \quad F_\pi^{(1)}(\nu) - \epsilon \leq F_\pi^{(2)}(\nu) \leq F_\pi^{(1)}(\nu) + \epsilon. \label{eqn:27}
\end{align}
Using \thref{thm:Fguarantee1} for the bound obtained on $F_\pi^{(1)}$ for the first domain,
\begin{align}
    \Pr \Big(\forall \nu, \,\, F_{-}^{(1)}(\nu) \leq F_\pi^{(1)}(\nu) \leq F_+^{(1)}(\nu) \Big) \geq 1 - \delta. \label{eqn:28}
\end{align}
Therefore, combining \eqref{eqn:27} and \eqref{eqn:28}, 
\begin{align}
    \Pr \Big(\forall \nu, \,\, F_{-}^{(1)}(\nu) - \epsilon \leq F_\pi^{(2)}(\nu) \leq F_+^{(1)}(\nu) + \epsilon \Big) \geq 1 - \delta. \label{eqn:29}
\end{align}
Then by the construct in \eqref{eqn:Fshiftbound}, it follows from \eqref{eqn:29} that
\begin{align}
    \Pr \Big(\forall \nu, \,\, F_{-}^{(2)}(\nu) \leq F_\pi^{(2)}(\nu) \leq F_+^{(2)}(\nu) \Big) \geq 1 - \delta.
\end{align}
\end{proof}

\section{Extended Discussion for UnO}

\subsection{Nuances for CDF Inverse and CVaR}
\label{apx:UnOinverse}
For brevity, some nuances for $\hat F_n^{-1}(\alpha)$ and $\text{CVaR}_\pi^\alpha(\hat F_n)$ were excluded from the main paper.
We discuss them in this section.

As discussed earlier in \thref{rem:geq1}, it is possible that $\hat F_n(\nu) > 1$ for some $\nu \in \mathbb R$ due to the use of importance weighting.
Similarly, it is also possible that $\hat F_n(\nu) < 1$ for all $\nu \in \mathbb R$.
Specifically, if $\hat F_n(\nu) < \alpha$ for all $\nu$, then it raises the question: how can one obtain an estimate of $ F_\pi^{-1}(\alpha)$?
To resolve this issue, we use the following estimator of $ F_\pi^{-1}(\alpha)$ for UnO:
\begin{align}
    \hat F^{-1}_n(\alpha) \coloneqq \begin{cases}
    \min \Big\{g \in (G_{(i)})_{i=1}^n \Big| \hat F_n(g) \geq \alpha \Big\}, & \text{if} \quad  \exists \, g  \,\,\text{s.t.}\,\, \hat F_n(g) \geq \alpha,
    \\
    \max (G_{(i)})_{i=1}^n  & \text{otherwise.}
    \end{cases}
    \label{apx:eqn:inverseF}
\end{align}
However, it is known from \thref{thm:Funbiased} that $\hat F_n$ is a uniformly consistent estimator of $F_\pi$.
Therefore, the edge case that $\hat F_n(\nu) < \alpha$ for all $\nu$ cannot occur in the limit as $n \rightarrow \infty$.
Resolving this is required mostly when the sample size is small.

Regarding CVaR, it is known \citep{acerbi2002coherence} that when the distribution of a random variable (which is $G_\pi$ for UnO) is continuous, then CVaR can be expressed as,
\begin{align}
    \text{CVaR}^\alpha_\pi(F_\pi) &= \mathbb{E}\left[G_\pi \middle| G_\pi \leq F_\pi^{-1}(\alpha) \right], \label{eqn:apx:cvar1}
    \end{align}
and thus an off-policy sample estimator for \eqref{eqn:apx:cvar1} can be constructed as,    
    \begin{align}
    {\text{CVaR}}^\alpha_\pi(\hat F_n) &\coloneqq \frac{1}{\alpha}\sum_{i=1}^n \text{d}\hat F_n(G_{(i)}) G_{(i)} \mathds{1}_{\left\{G_{(i)} \leq {Q}^\alpha_\pi(\hat F_n)\right\}}.
\end{align}
However, for distributions that are not continuous, a more generic definition for CVaR is \citep{brown2007large},
\begin{align}
    \text{CVaR}^\alpha_\pi(F_\pi) &= \inf_{g} \left\{g - \frac{1}{\alpha}\mathbb{E}\Big[\max\big(0,g - G_\pi \big) \Big] \right\}. \label{eqn:apx:cvar2}
\end{align}
We extend the sample estimator by \citet{brown2007large} for \eqref{eqn:apx:cvar2} and use the following off-policy estimator for UnO:
\begin{align}
    \text{CVaR}^\alpha_\pi(\hat F_n) &\coloneqq \hat F_n^{-1}(\alpha) - \frac{1}{\alpha}\sum_{i=1}^{n} \text{d}\hat F_n(G_{(i)}) \left(\max\big(0, \hat F_n^{-1}(\alpha) - G_{(i)}\big) \right)
\end{align}

\subsection{Optimizing Confidence Bands for Tighter Bounds: }
Constructing $\mathcal F$ requires selecting $K$ key points for which $\texttt{CI}$s are computed.
If too many key points are selected, then each $\delta_i$ has to be a very small positive value so that $\sum_{i=1}^K \delta_i \leq \delta$, as required by \thref{thm:Fguarantee1}.
This will make the confidence intervals wide at each key point. 
In contrast, if too few key points are selected, then the confidence intervals at the $\kappa_i$'s will be relatively tighter, but this will not tighten the intervals \textit{between} the $\kappa_i$'s due to the way $F_-$ and $F_+$ are constructed in \eqref{eqn:Fband2}.
Further, the overall tightness of $\mathcal F$ is also affected by the location of each $\kappa_i$ and its respective failure rate $\delta_i$.  
Therefore, to get a tight $\mathcal F$, we propose searching for a $\theta \coloneqq \left(K,(\kappa_i)_{i=1}^K,   (\delta_i)_{i=1}^K\right)$ that minimizes the area enclosed in $\mathcal F$.
That is, let $\Delta_{i+1} \coloneqq \kappa_{i+1} - \kappa_i$, then the area enclosed in $\mathcal F$ is
\begin{align}
    \mathscr A(\theta) \coloneqq \sum_{i=0}^{K} \left(\texttt{CI}_+(\kappa_{i+1}, \delta_{i+1})- \texttt{CI}_-(\kappa_{i}, \delta_{i}) \right) \Delta_{i+1}. 
\end{align}
To avoid multiple comparisons \citep{benjamini1995controlling}, we first partition $\mathcal D$ into $\mathcal D_\text{train}$ and $\mathcal D_\text{eval}$. Subsequently, $\mathcal D_\text{train}$ is used to search for $\theta^*$ as follows, and then $\theta^*$ is used with $\mathcal D_\text{eval}$ to obtain $\mathcal F$. 
\begin{align}
    \theta^* \coloneqq \,\,&  \underset{\theta}{ \argmin} \,\, \mathscr A(\theta) \label{eqn:Foptim}
    \\
    \text{s.t.} \quad\quad&   G_{\min} < \kappa_i <  G_{\max}, \quad  \sum_{i=1}^K \delta_i \leq \delta, \quad  \delta_i \geq 0, \quad & \forall i \in (1,...,K). 
\end{align}
\begin{rem}
    A global optimum of \eqref{eqn:Foptim} is not required---any feasible $\theta$ can be used with $\mathcal D_\text{eval}$ to obtain a confidence band $\mathcal F$. Optimization only helps by making the band tighter.
\end{rem}

For our experimental results, when searching $\theta^*$ for \eqref{eqn:Foptim}, we keep the number of key points, $K$, fixed to $\log(n)$, where $n$ is the number of observed trajectory samples in $\mathcal D$. To search for the locations $(\kappa_i)_{i=1}^K$ and the failure rates $(\delta_i)_{i=1}^K$ at each key point, we use the BlackBoxOptim library\footnote{https://github.com/robertfeldt/BlackBoxOptim.jl} available in Julia \citep{bezanson2017julia}. 
To perform this optimization, we construct $\mathcal D_\text{train}$ using $5\%$ of data from $\mathcal D$, and construct $\mathcal D_\text{eval}$ using the rest of the data.
Following the idea by \citet{thomas2015higha}, when searching for $\theta^*$ using $\mathcal D_\text{train}$, bounds for the key points $(\kappa_i)_{i=1}^K$ are obtained as if the number of samples are equal to the number of samples available in $\mathcal D_\text{eval}$
(see Equation $7$ in the work by \citet{thomas2015higha} for more discussion on this).
Instead of using a single split, one could potentially also leverage results by \citet{romano2019multiple} to use multiple splits; we leave this for future work.

\subsection{Bound Specialization}
\label{apx:UnOspecial}

In \eqref{eqn:Foptim}, $\theta$ was searched to minimize the area $\mathscr A(\theta)$ enclosed within $\mathcal F(\theta)$, where $\mathcal F(\theta)$ represents the CDF band obtained using the parameter $\theta$.
This was done without any consideration of the downstream parameter $\psi$ for which the bounds would be constructed using $\mathcal F(\theta)$.
Therefore, the band $\mathcal F(\theta)$ is tight overall, but need not be the best possible if only a specific parameter $\psi$'s bounds are required using $\mathcal F(\theta)$.

For example, consider obtaining bounds for $\text{CVaR}_\pi^\alpha$.
As can be seen from the geometric insight in Figure \ref{fig:geometry}, bounds for CVaR are mostly dependent on the tightness of $\mathcal F(\theta)$ near the lower tail.
Therefore, if one can obtain $\mathcal F(\theta)$ that is tighter near the lower tail, albeit looser near the upper tail, that would provide a better bound for CVaR as opposed to a band $\mathcal F(\theta)$ that has uniform tightness throughout.

To get a tight $\mathcal F(\theta)$ in such cases where there is a single downstream parameter of interest,  we propose searching for a $\theta \coloneqq \left(K,(\kappa_i)_{i=1}^K,   (\delta_i)_{i=1}^K\right)$ that directly optimizes for the final parameter of interest instead of the area enclosed in $\mathcal F(\theta)$.
For example, if only the lower bound for $\psi(F_\pi)$ is required, then let 
\begin{align}
    \mathscr \psi_-(\theta) &\coloneqq \underset{F \in \mathcal F(\theta)}{\inf} \,\,\, \psi (F).
\end{align}
Next, the optimization using $\mathcal D_\text{train}$ can then be modeled as the following,
\begin{align}
    \theta^* \coloneqq \,\,&  \underset{\theta}{ \argmax} \,\, \mathscr \psi_-(\theta)
    \\
    \text{s.t.} \quad\quad&   G_{\min} < \kappa_i <  G_{\max}, \quad & \forall i \in (1,...,K), 
    \\
    &  \sum_{i=1}^K \delta_i \leq \delta, \quad  \delta_i \geq 0, & \forall i \in (1,...,K),
\end{align}
This would result in $\theta^*$ that when used with $\mathcal D_\text{eval}$ can be expected to provide the CDF band which will yield the highest lower bound for $\psi(F_\pi)$.

\subsection{Approximate Bounds for Any Parameter using Bootstrap}
\label{apx:sec:boot}

In Algorithm \ref{alg:pboot}, we provide the pseudo code for obtaining bootstrap-based bounds for any parameter $\psi(F_\pi)$. In Line 1, $B$ datasets $(\mathcal D_i^*)_{i=1}^B$ are generated from $\mathcal D$ using resampling, and for each of these resampled data sets, $B$ (weighted IS-based) CDF estimates  $(\bar F^{*}_{n,i})_{i=1}^B$ are obtained. 
In Line 3,  sample estimates $(\psi(\bar F^{*}_{n,i}))_{i=1}^B$  for the desired parameter $\psi(F_\pi)$ are constructed using the $B$ estimated CDFs.
In Line 4, these sample estimates for $\psi(F_\pi)$ can be subsequently passed to the bias-corrected and accelerated (BCa   \citep{efron1994introduction}) bootstrap procedure to obtain approximate lower and upper bounds $(\psi_-, \psi_+)$.

	\IncMargin{1em}

	\begin{algorithm2e}[h]
		\textbf{Input:} {Dataset $\mathcal D$}, Confidence level $1 - \delta$ 
		\\
		Bootstrap $B$ datasets $(\mathcal D_i^*)_{i=1}^B$ and create $(\bar F^{*}_{n,i})_{i=1}^B$ 
		\\
		Bootstrap  estimates $(\psi(\bar F^{*}_{n,i}))_{i=1}^B$ using $(\bar F_{n,i}^*)_{i=1}^B$
		\\
	 
	Compute $(\psi_-, \psi_+)$ using BCa($(\psi(\bar F^{*}_{n,i}))_{i=1}^B$, $\delta$)  
		\\
		\textbf{Return} $(\psi_-, \psi_+)$
		\caption{Bootstrap Bounds for $\psi(F_\pi)$}
	\label{alg:pboot}
	\end{algorithm2e}
	\DecMargin{1em}

\subsection{Extended Discussion of High-Confidence Bounds for Any Parameter}
\label{apx:UnOoptim}

Section \ref{sec:Fbounds} of the main paper discussed how high-confidence bounds $\psi_-$ and $\psi_+$ can be obtained for any parameter $\psi(F_\pi)$ using the confidence band $\mathcal F$.
Specifically, in Figure \ref{fig:geometry}, geometric insights for obtaining the analytical form of the bounds for the mean, quantile, and CVaR were discussed.
Extending that discussion, Figure \ref{apx:fig:geometry} provides geometric insights for bounding other parameters, namely variance, inter-quantile ranges, and entropy, in the off-policy setting.

An advantage of having the CDF band $\mathcal F$ is that it can permit bounding other novel parameters that might be of interest.
While analytical bounds using geometric insights, as discussed for a number of popular parameters, should also be the first attempt for the desired novel parameter, it may be the case that such geometric insight cannot be obtained.
In such cases, a CDF $F$ can be directly parameterized using a spline curve, or a piecewise non-decreasing function that is constrained to be within $\mathcal F$.
%
Depending on how rich this parameterization is, it may be feasible to use a black-box optimization routine and obtain a globally optimal $F$ that minimizes (maximizes) the desired parameter $\psi(F)$.
If not feasible, an approximate bound can be achieved by using the best found local optima. 

\begin{figure*}[!ht]
    \centering
    \includegraphics[width=0.32\textwidth]{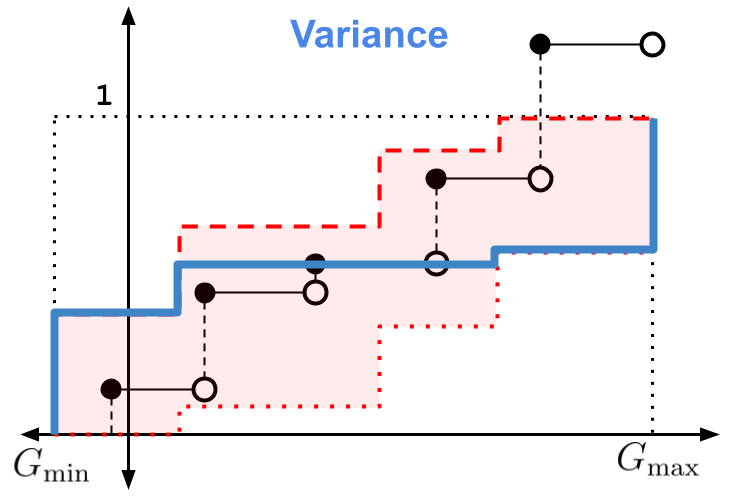}
    \includegraphics[width=0.32\textwidth]{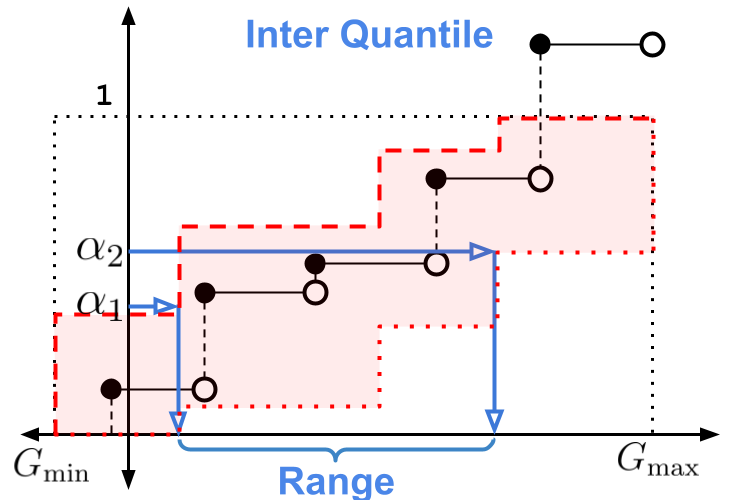}    \includegraphics[width=0.32\textwidth]{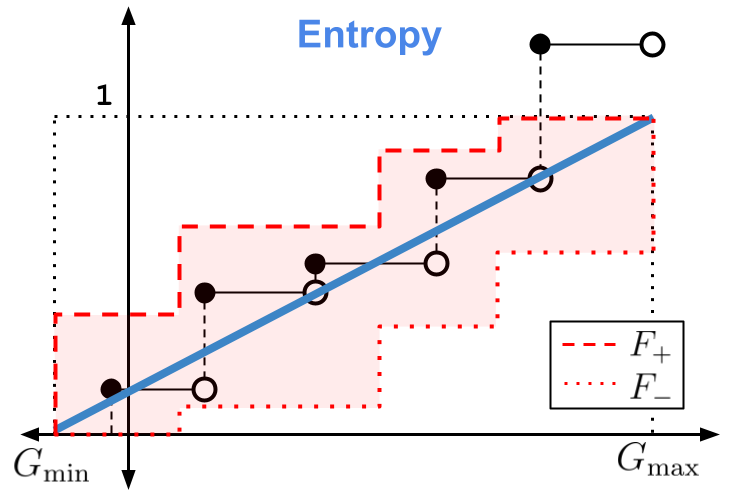}
    \caption{Similar to Figure \ref{fig:geometry}, given a confidence band $\mathcal F$, lower and upper bounds for several other parameters can also be obtained using simple geometric insights.
    \textbf{(Left)} An upper bound for the variance can be obtained by observing that variance is maximized when the probability of events on either extreme are maximized. Therefore, the CDF $F \in \mathcal F$ for such a distribution will initially follow (from left to right) $F_+$ and then make a horizontal jump (at a specific jump point) to $F_-$, which it then follows until $1$. The variance of the distribution with this CDF, $F$, will give the desired upper bound. Analogously, the CDF that initially follows $F_-$ and then jumps vertically (at a specific jump point) to $F_+$, assigns highest probability to events near the mean and thus results in the lowest variance \citep{romano2002explicit}. 
    \textbf{(Middle)} An upper bound for the inter-quantile range can be obtained by maximizing the value of upper $\alpha_2$-quantile and subtracting the minimum value for the lower $\alpha_1$-quantile. This can be obtained by $F^{-1}_-(\alpha_2) - F^{-1}_+(\alpha_1)$.
    Analogously, a lower bound can be obtained using $\max(0, F^{-1}_+(\alpha_2) - F^{-1}_-(\alpha_1))$.
    \textbf{(Right)} An upper bound on the entropy can be obtained by what \citet{learned2008probabilistic} call a ``string-tightening'' algorithm. That is, if the ends of a tight string are held at the bottom-left and the upper-right corner of $\mathcal F$, and the entire string is constrained to be within $\mathcal F$, then the path of the string corresponds to the $F \in \mathcal F$ that has highest entropy. In our figure, such an $F$ corresponds to the CDF of the uniform distribution, which is known to have maximum entropy. Unless some stronger assumptions are made, the lower bound on differential entropy is typically $- \infty$ if there is any possibility of a point mass.}
    \label{apx:fig:geometry}
\end{figure*}

\subsection{Tackling Smooth Non-stationarity using Wild Bootstrap}
\label{apx:sec:NSboot}

From \thref{thm:Funbiased}, it is known that the proposed estimator $\hat F_n(\kappa)$ provides unbiased estimates for $F_\pi(\kappa)$, even with a single observed trajectory.
In the non-stationary setting, let the true underlying CDF of returns for $\pi$ in the episode $i$ be $F^{(i)}_\pi(\kappa)$, and the estimate of $F^{(i)}_\pi(\kappa)$ using the trajectory observed during the episode $i$ be
\begin{align}
    \hat F_n^{(i)}(\kappa) &\coloneqq \rho_{i} \mathds{1}_{\{G_i \leq \kappa\}} &\forall i \in \{1,2,...,L\}.
\end{align}
Next, the trend of the sequence $\big(\hat F_n^{(i)}(\kappa) \big)_{i=1}^L$ can be analyzed to forecast $\hat F_n^{(L+\ell)}(\kappa)$ for the future episode $L + \ell$ when the policy $\pi$ will be executed.
Particularly, under \thref{ass:ns},  $\exists w_\kappa$, such that, $\forall i \in (1,...,L+\ell), \,\,\,\, F^{(i)}_\pi(\kappa) = \phi(i)^\top w_\kappa.$ 
Therefore, using the unbiased estimates $\big(\hat F_n^{(i)}(\kappa) \big)_{i=1}^L$ of $\big(F_\pi^{(i)}(\kappa) \big)_{i=1}^L$, we propose searching for $w_\kappa$ using least-squares regression. Let $X\coloneqq [1, 2, ...., L] $ be the episode numbers in the past, then the predicates $\Phi_\kappa$, the targets $Y_\kappa$, and the corresponding least-squares solution $w_\kappa$ can be obtained as, 
\begin{align}
    \Phi_\kappa &\coloneqq [\phi(X_1), \phi(X_2), ..., \phi(X_L)] & \in \mathbb{R}^{L \times d},\\
    Y_\kappa &\coloneqq [\hat F_n^{(1)}(\kappa), \hat F_n^{(2)}(\kappa), ..., \hat F_n^{(L)}(\kappa)] & \in \mathbb{R}^{L\times 1},\\
    w_\kappa &\coloneqq \left( \Phi_\kappa^\top \Phi_\kappa \right)^{-1}\Phi_\kappa^\top Y_\kappa & \in \mathbb{R}^{d \times 1}.
\end{align}
Using $w_\kappa$, an unbiased estimate of $F_\pi^{(L+\ell)}(\kappa)$ can be obtained as,
\begin{align}
    \hat F_n^{(L+\ell)}(\kappa)  &\coloneqq \phi(L+\ell)^\top w_\kappa.  \label{eqn:foreast}
\end{align}
The point forecast $\hat F_n^{(L+\ell)}(\kappa)$ from \eqref{eqn:foreast} can then be combined with Algorithms 1 and 2 presented by \citet{chandak2020towards} to obtain wild-bootstrap-based confidence intervals for $F_\pi^{(L+\ell)}(\kappa)$. Once the confidence intervals are obtained at different key points, \eqref{eqn:Fband2} can be used to construct an entire confidence band for $F_\pi^{(L+\ell)}$.

\section{Empirical Details}
\label{apx:empirical}

\subsection{Domain Details}
\label{apx:empiricaldomain}

In this section, we discuss domain details and how $\pi$ and $\beta$ were selected for  these domains.
The code for the domains, baselines \citep{thomas2015higha,chandak2021hcove}, and the proposed UnO estimator can be found at \href{https://github.com/yashchandak/UnO}{https://github.com/yashchandak/UnO}.

\paragraph{Recommender System: }
Systems for online recommendation of tutorials, movies, advertisements, etc., are ubiquitous  \citep{theocharous2015ad,theocharous2020reinforcement}.
In these settings, it may be beneficial to fully characterize a customer's experience once the new system/policy is deployed.
To abstract such settings, we created a simulated domain where the user's interest for a finite set of items is represented using the corresponding item's reward.

Using an actor-critic algorithm \citep{SuttonBarto2}, we find a near-optimal policy $\pi$, which we use as the evaluation policy.
Let $\pi^\texttt{rand}$ be a random policy with uniform distribution over the actions (items).
Then for an $\alpha = 0.5$, we define the behavior policy $\beta(a|s) \coloneqq \alpha \pi(a|s) + (1-\alpha) \pi^\texttt{rand}(a|s)$ for all states and actions.
\paragraph{Gridworld: }
We also consider a standard continuous-state Gridworld with partial observability (which also makes the domain non-Markovian in the observations), stochastic transitions, and eight discrete actions corresponding to up, down, left, right, and the four diagonal movements.
The off-policy data was collected using two different behavior policies, $\beta_1$ and $\beta_2$, and the evaluation policies for this domain were obtained similarly as for the recommender system domain discussed above.
Particularly, using $\alpha = 0.5$, we define  $\beta_1(a|o) \coloneqq \alpha \pi(a|0) + (1-\alpha) \pi^\texttt{rand}(a|o)$ for all states and actions. Similarly, $\beta_2$ was defined using $\alpha = 0.75$.

\paragraph{Diabetes Treatment: } 
This domain is modeled using an open source implementation \citep{simglucose} of the U.S. Food and Drug Administration (FDA) approved Type-1 Diabetes Mellitus Simulator (T1DMS) \citep{man2014uva} for the treatment of type-1 diabetes.
An episode corresponds to a day, and each step of an episode corresponds to a minute in an \textit{in silico} patient's body and is governed by a continuous time nonlinear ordinary differential equation (ODE) \citep{man2014uva}.
In such potentially critical medical applications, it is important to go beyond just the expected performance and to characterize the risk associated with it, \textit{before deployment}. 

To control the insulin injection, which is required for regulating the blood glucose level, we use a policy that controls the parameters of a \textit{basal-bolus controller}. This controller is based on the amount of insulin that a person with diabetes is instructed to inject prior to eating a meal \citep{bastani2014model}:
\begin{align}
    \text{injection} = \frac{\text{current blood glucose} - \text{target blood glucose}}{CF} + \frac{\text{meal size}}{CR},
\end{align}
where ``current blood glucose'' is the estimate of the person's current blood glucose level, ``target blood glucose'' is the desired blood glucose, ``meal size'' is the estimate of the size of the meal the patient is about to eat, and $CR \in [CR_\texttt{min}, CR_\texttt{max}]$ and $CF \in [CF_\texttt{min}, CF_\texttt{max}]$ are two parameters of the controller that must be tuned based on the body parameters to make the treatment effective.
We designed an RL policy that acts on the discretized space of the parameters, $CR$ and $CF$, for the above basal-bolus controller.
Behavior and evaluation policies were selected similarly as discussed for the recommender system domain.

\subsection{Extended Discussion on Results for Stationary Settings}
\label{apx:empiricalS}

The main results for the stationary setting are provided in Figure \ref{fig:results} of the main body.
In this section, we provide some additional discussion on the observed trends for the bounds.

Notice in Figure \ref{fig:results} that UnO-CI bounds for the variance can require up to an order of magnitude less data compared to the existing bound for the variance \citep{chandak2021hcove}.
This can be attributed to the fact that \citet{chandak2021hcove} construct the bounds using $\mathbb{E}[\rho G^2] - \mathbb{E}[\rho G]^2$, where it can be observed that the second term depends quadratically on $\rho$. This makes the variance of that term effectively ``doubly exponential'' in the horizon length. This does not happen in the CDF-based approach as the bounds at any key point $\kappa$ depend on  $\mathbb{E}[\rho \mathds{1}_{G<\kappa}])$, which does not have any higher powers of $\rho$.

Another thing worth noting in Figure \ref{fig:results} is that not only the bounds for different parameters, but even the upper and lower bounds for the same parameter converge at different rates (especially for smaller values of $n$).
Therefore, there are two particular trends to observe:
(a) how close the bounds are to the true value at the beginning, and (b) how quickly they improve.  
Both of these depend on the direction for which clipping plays a major role and also how the bounds depend on the tails.
For example, for the mean, as the distributions are right skewed (because evaluating policy $\pi$ is a near-optimal policy), the bounds on the CDF are clipped more from the lower end (so that $F(\nu) >= 0$ always). 
Therefore, since the upper bound on the mean depends on the lower CDF bound (see Figure \ref{fig:geometry}), it starts close to the estimate itself but the progress actually seems slow because shrinking CDFs bounds at any specific $F(\nu)$ from the lower end does not impact the bound until the point where clipping is not required anymore.

For variance, the upper bound depends on both the upper bound on the lower tail and the lower bound on the upper tail (see Figure \ref{apx:fig:geometry}), and these two benefit from clipping the least and also converge the slowest. 
In contrast, the lower bound for variance depends on the upper bound on the upper tail and the lower bound on the lower tail, which are clipped immediately to be below 1 and above 0, respectively. 
Appendix \ref{apx:ass} (knowledge of $G_{\min}$, $G_{\max}$) and Fig \ref{apx:fig:geometry} provide more intuition on this.

\end{document}